\documentclass{article}

\usepackage{microtype}
\usepackage{graphicx}
\usepackage{subcaption}
\usepackage{booktabs} 

\usepackage{hyperref}
\usepackage{amsmath}
\usepackage{multirow}
\usepackage{orcidlink}

\usepackage[accepted]{icml2026}

\usepackage{amsmath}
\usepackage{amssymb}
\usepackage{mathtools}
\usepackage{amsthm}
\usepackage{wrapfig}
\usepackage{bm}

\usepackage[capitalize,noabbrev]{cleveref}

\theoremstyle{plain}
\newtheorem{theorem}{Theorem}[section]

\newtheorem{lemma}[theorem]{Lemma}
\newtheorem{corollary}[theorem]{Corollary}
\theoremstyle{definition}

\newtheorem{assumption}[theorem]{Assumption}
\theoremstyle{remark}

\usepackage[textsize=tiny]{todonotes}

\icmltitlerunning{Safe Reinforcement Learning with Preference-based Constraint Inference}

\begin{document}

\twocolumn[
  \icmltitle{Safe Reinforcement Learning with Preference-based Constraint Inference}



  \icmlsetsymbol{equal}{*}

  \begin{icmlauthorlist}
    \icmlauthor{Chenglin Li  \orcidlink{0009-0000-8667-920X}}{thu}
    \icmlauthor{Grant Ruan \orcidlink{0000-0003-2660-9298}}{mit}
    \icmlauthor{Hua Geng  \orcidlink{0000-0002-8336-6731}}{thu}
  \end{icmlauthorlist}

  \icmlaffiliation{thu}{Department of Automation, Tsinghua University, Beijing, China}
  \icmlaffiliation{mit}{Laboratory for Information \& Decision Systems, Massachusetts Institute of Technology, Cambridge, MA, USA}

  \icmlcorrespondingauthor{Hua Geng}{genghua@tsinghua.edu.cn}

  \icmlkeywords{Machine Learning, ICML}

  \vskip 0.3in
]



\printAffiliationsAndNotice{}  

\begin{abstract}

Safe reinforcement learning (RL) is a standard paradigm for safety-critical decision making. However, real-world safety constraints can be complex, subjective, and even hard to explicitly specify. Existing works on constraint inference rely on restrictive assumptions or extensive expert demonstrations, which are not realistic in many real-world applications. How to cheaply and reliably learn these constraints is the major challenge we focus on in this study. While inferring constraints from human preferences offers a data-efficient alternative, we identify popular Bradley-Terry (BT) models fail to capture the asymmetric, heavy-tailed nature of safety costs, resulting in risk underestimation. It is still rare in the literature to understand the impacts of BT models on the downstream policy learning. To address the above knowledge gaps, we propose a novel approach namely Preference-based Constrained Reinforcement Learning (PbCRL). We introduce a novel dead zone mechanism into preference modeling and theoretically prove that it encourages heavy-tailed cost distributions, thereby achieving better constraint alignment. Additionally, we incorporate a Signal-to-Noise Ratio (SNR) loss to encourage exploration by cost variances, which is found to benefit policy learning. Further, two-stage training strategy is deployed to lower online labeling burdens while adaptively enhancing constraint satisfaction. Empirical results demonstrate that PbCRL achieves superior alignment with true safety requirements and outperforms state-of-the-art baselines in terms of safety and reward. Our work explores a promising and effective way for constraint inference in Safe RL, with great potential in various safety-critical applications.

\end{abstract}

\section{Introduction}
\label{introduction}

A common paradigm to integrate safety considerations in sequential decision making is through Safe Reinforcement Learning~(RL) \cite{garcia2015comprehensive,gu2024review}. Typically formulated as a Constrained Markov Decision Process (CMDP) \cite{altman2021constrained}, Safe RL aims to maximize reward while meeting safety constraints, often expressed as bounds on the expectation of cumulative cost.

Apart from those explicit constraints such as univariate bounds, there is a large family of implicit constraints in the real world that could be highly complicated, trajectory-based, and perhaps seldom well-specified. Many of these constraints even involve human preferences and reflect trade-offs between different criteria. In robotic navigation or autonomous driving, for example, unsafe policies causing potential collisions, uncomfortable maneuvers, and other risky outcomes are never easy to quantify or subjective to human preference \cite{cosner2022safety}. Unfortunately, this is not an isolated case; even with dedicated system engineering, it is still too expensive and impractical to construct proper constraints that accurately reflect all safety requirements \cite{xu2024robust}.

Constraint inference from data has emerged as a promising alternative to manual specification. Existing approaches include Inverse Reinforcement Learning (IRL) \cite{ng2000algorithms, scobee2019maximum}, Control Barrier Functions (CBF) \cite{robey2020learning} and robust optimization \cite{xu2024robust}. These methods typically rely on extensive expert demonstrations to infer the underlying constraints. However, collecting such fine-grained demonstrations from experts is often expensive, time-consuming, or even infeasible in certain scenarios \cite{arora2021survey}.

A more data-efficient alternative is learning through human feedback of comparisons instead of demonstrations \cite{christiano2017deep,wu2022survey}. This is a typical human-in-the-loop data collection, because options are provided by the RL agent and human only need to compare and then select. Learning from these comparison datasets is not straightforward and requires redesign of the learning algorithms. In the literature, prior works often restrict the constraint function class (e.g. linear, continuous) \cite{sui2015safe,amani2021safe,wachi2020safe} or require dense state-level feedback \cite{bennett2023provable}. These settings simplify the problem but limit the applicability to more complex scenarios as well.

A few recent works have borrowed the Bradley-Terry~(BT) model \cite{bradley1952rank} from preference learning to infer constraints from human preference \cite{reddy2024safety,daisafe}. Typically, a binary cost function is learned in \cite{reddy2024safety} from human feedback with various granularity-at trajectory level in simple tasks or state level in complex ones. Within the domain of language model alignment, works such as \cite{daisafe} also leverage the BT model to constrain potentially harmful behaviors in generated text responses. Details of related works are in Appendix~\ref{appendix:related_work}. Compared to densely annotated demonstrations, preference data, collected usually at trajectory level, is more coarse-grained and cheaper to obtain. This makes preference learning a more practical and data-efficient solution for constraint inference.

However, we identify two major drawbacks when directly applying standard BT models for constraint inference in Safe RL. First, there is a critical misalignment between the ranking objective of BT models and the distributional requirements of Safe RL. Standard BT models are inherently designed for learning the cost rank (relative order) between trajectories, but can be biased in estimating the true distributions (need absolute values) \cite{bradley1952rank}. Since Safe RL relies on the expected cumulative cost to formulate constraints \cite{altman2021constrained}, two different cost distributions with identical rankings may lead to different expectation values, as well as distinct constraint satisfaction outcomes. Crucially, it is found that BT models tend to infer symmetry but most true cost distributions in Safe RL are heavy-tailed \cite{yang2021wcsac,li2025tilted}. This phenomenon arises from the cascading effects of unsafe events: an initial violation can trigger a sequence of subsequent failures. For instance, once a navigation robot collides with an obstacle, it is more likely to be in hazardous situations in the near future, causing the cumulative cost to grow much larger. This cascading effect results in a heavy-tailed distribution of cumulative costs over trajectories. As conceptually illustrated in Figure~\ref{fig:distribution}, the mismatch between the symmetric BT-inferred cost distribution and the true heavy-tailed distribution leads to underestimation of expected costs, which will result in aggressive/dangerous policies.

\begin{figure}[ht]
  \centering
  \begin{minipage}{0.32\linewidth}
    \centering
    \includegraphics[width=\linewidth]{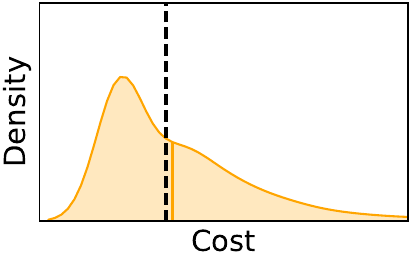}
    \vspace{1ex}
    (a) Ground truth
  \end{minipage}
  \hfill
  \begin{minipage}{0.32\linewidth}
    \centering
    \includegraphics[width=\linewidth]{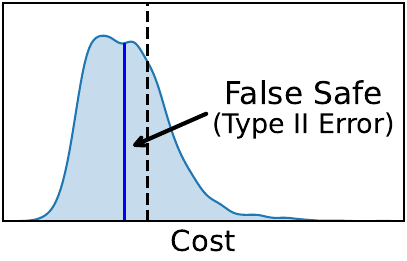}
    \vspace{1ex}
    (b) BT model
  \end{minipage}
  \hfill
  \begin{minipage}{0.32\linewidth}
    \centering
    \includegraphics[width=\linewidth]{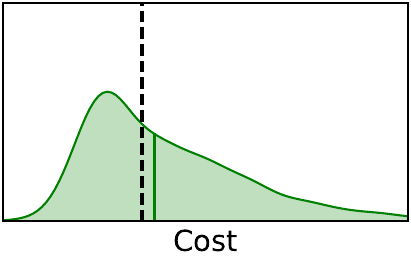}
    \vspace{1ex}
    (c) Our model
  \end{minipage}
  \caption{Cost distributions: (a) Ground truth: heavy-tailed with expectation (solid line) exceeding safety threshold (dashed line), indicating unsafe. (b) BT model: symmetric with underestimated expectation below the threshold, induces a \textbf{Type II Error} where the model falsely believes the constraint is satisfied. (c) Our model: similar heavy-tailed distribution, better approximate expectation of ground truth cost.}
  \label{fig:distribution}
  \vspace{-2ex}
\end{figure}

\vspace{-0.5ex}
Second, existing methods largely overlook the impact of the learned cost model on downstream policy optimization. Currently, most studies evaluate cost models solely based on prediction accuracy, failing to assess their influence on policy quality \cite{lambert2024rewardbench,wen2024rethinking}. It has been found that a cost model that produces flattened outputs can hinder effective policy updates \cite{razin2025makes}. Alternative designs that encourage exploration and cost diversification should be preferred to provide informative signals for policy learning.

To address the above issues, we propose a novel Safe RL approach with preference-based constraint inference, namely Preference-based Constrained Reinforcement Learning (PbCRL). Our contributions are summarized as follows:
\begin{enumerate}
  \item We develop a novel constraint inference model with dead zone designs. We theoretically prove that the dead zone encourages heavy-tailed cost distributions, which mitigates the constraint underestimation (unsafe and dangerous) with better constraint alignment.
  
  \item A Signal-to-Noise Ratio (SNR) loss is proposed to induce sufficient level of cost variation. This can be regarded as a regularization term to encourage informative signals for effective downstream policy learning.
  
  \item We establish a two-stage training strategy to lower the burden of online labeling and help improve constraint satisfaction. Furthermore, we provide a theoretical convergence guarantee for the overall PbCRL algorithm.
  
  \item We conduct extensive experiments on Safe RL benchmarks. Results demonstrate that PbCRL achieves superior alignment with true safety requirements and outperforms baselines in terms of safety and reward.
\end{enumerate}

\section{Problem Formulation}
\label{formulation}

\subsection{Safe RL with Unknown Constraints}

Safe RL is formulated on the constrained Markov decision process (CMDP) $\langle S,A,P,r,c,d,\gamma \rangle$, where $S$ denotes the state space, $A$ denotes the action space, $P(\cdot|s,a)$ is the state transition probability function, $r(s,a)$ is the reward function, $c(s,a)$ is the cost function, $d$ is a given safety threshold, and $\gamma\in(0,1)$ is the discount factor.

The agent interacts with the environment at each time step $t$ by observing the current state $s_t\in S$ and selecting an action $a_t\in A$ from its policy $\pi(\cdot|s)$. Then the agent receives a reward $r(s_t,a_t)$ and a cost $c(s_t,a_t)$ from the environment. The next state $s_{t+1}$ is generated by $P(\cdot|s_t,a_t)$. Given an initial state $s_0$, the cumulative reward is defined as $R(s_0, \pi)=\sum_{t=0}^{\infty}\gamma^tr(s_t,a_t)$, and the cumulative cost as $C(s_0, \pi)=\sum_{t=0}^{\infty}\gamma^tc(s_t,a_t)$.

With the above setup, Safe RL is established to learn a policy that maximizes the expected cumulative reward $\mathcal{J}^R(\pi)$ while meeting safety constraints, which is usually expressed as the expected cumulative cost $\mathcal{J}^C(\pi)$ being below a certain threshold $d$. Formally,
\begin{subequations}\label{eq:ecrl}
  \fontsize{8.5}{7}\selectfont
  \begin{align}
  \max_{\pi}\quad & \mathcal{J}^R(\pi):=\mathbb{E}_{\pi}[R(s,\pi)]
  =\mathbb{E}_{\pi}[\sum\nolimits_{t}\gamma^tr(s_t,a_t)] \\
  \text{s.t. }\quad & \mathcal{J}^C(\pi):=\mathbb{E}_{\pi}[C(s,\pi)]
  =\mathbb{E}_{\pi}[\sum\nolimits_{t}\gamma^t c(s_t,a_t)]\le d \label{eq:ecrl_constraint}
  \end{align}
\end{subequations}
In this paper, we consider the situation where the safety constraint is not explicitly defined and must be learned. The constraint includes both the cost function $c(s,a)$ and the threshold $d$. Therefore, we reformulate the safe RL problem in Eq.~\ref{eq:ecrl} with inferred constraint:
\begin{subequations}\label{eq:rlcl}
\begin{align}
\max_{\pi}& \ \mathcal{J}^R(\pi) = \mathbb{E}_{\pi}[\sum\nolimits_{t}\gamma^tr(s_t,a_t)], \\
\text{s.t.}& \ \mathcal{J}^{\hat C}(\pi) = \mathbb{E}_{\pi}[\sum\nolimits_{t}\gamma^t{\hat c}(s_t,a_t)]\le 0 \label{eq:rlcl_constraint}
\end{align}
\end{subequations}
where Eq.~\ref{eq:rlcl_constraint} is the constraint with learned cost function $\hat c(s,a)$. The threshold is assigned to be 0 without loss of generality. The constraint in Eq.~\ref{eq:rlcl_constraint} should be functionally equivalent to the true unknown constraint in Eq.~\ref{eq:ecrl_constraint}, reflecting the underlying safety requirements. 

\subsection{Learning Constraints from Preference}

Preference learning aims to learn a utility function from human preference \cite{christiano2017deep,lee2021b}. Feedback is collected in the form of pairwise preferences over two trajectories $\tau_1,\tau_2$, where $\tau$ denotes a sequence of state-action pairs $\{(s_0,a_0),(s_1,a_1),\ldots,(s_T,a_T)\}$ of length $T$. Human annotators provide preference label ($\mu_1$, $\mu_2$), where $\mu_1>\mu_2$ indicates that $\tau_1$ is preferred over $\tau_2$ (denoted as $\tau_1\succ\tau_2$).

The standard framework assumes human preferences follow a Bradley-Terry (BT) model based on an implicit utility function. In our context, we aim to learn a cost function $\hat c(s,a)$ where a lower cost implies a higher preference. Consequently, the probability of preferring $\tau_1$ over $\tau_2$ is modeled as:
\begin{align}
    \mathbb{\hat P}(\tau_1\succ\tau_2)=\sigma(\hat C(\tau_2)-\hat C(\tau_1))=\frac{e^{\hat C(\tau_2)}}{e^{\hat C(\tau_1)}+e^{\hat C(\tau_2)}}
    \label{eq:bt}
\end{align}
where $\hat C(\tau)=\sum_{(s_t,a_t)\sim \tau}\gamma^t{\hat c}(s_t,a_t)$ denotes trajectory cumulative cost, $\sigma(\cdot)$ is sigmoid function. The cost is optimized by minimizing the pairwise cross-entropy loss on a preference dataset $\mathcal{D}$:
\begin{align}
    \mathcal{L}_{pair}=-\mathbb{E}_\mathcal{D} \Big[ \mu_1\log\mathbb{\hat P}(\tau_1\succ\tau_2)+\mu_2\log\mathbb{\hat P}(\tau_1 \prec \tau_2)\Big]
    \label{eq:likelihood}
\end{align}

Complementing the pairwise comparisons, binary safety label $\epsilon$ is also collected in constraint inference works \cite{reddy2024safety,daisafe}, enabling the model to explicitly distinguish safe and unsafe trajectories. $\epsilon=1$ denotes safe trajectories satisfying the true constraint $C(\tau)\le d$, otherwise $\epsilon=0$. The full preference dataset is thus defined as $\mathcal{D}=\{(\tau_1,\tau_2,\mu_1,\mu_2,\epsilon_1,\epsilon_2)\}$.

Our goal is to infer the safety constraint (Eq.~\ref{eq:rlcl_constraint}) by learning $\hat c(s,a)$ from $\mathcal{D}$. To address the issues identified in Section \ref{introduction}, we propose a composite loss function for Preference-based Constrained Inference (PbCI):
\begin{align}
    \mathcal{L}_{PbCI}=\mathcal{L}_{pair}+\mathcal{L}_{safe}^{DZ}+\mathcal{L}_{SNR}
    \label{eq:total_loss}
\end{align}
where $\mathcal{L}_{pair}$ is the pairwise preference loss in Eq.~\ref{eq:likelihood}, $\mathcal{L}_{safe}^{DZ}$ is the dead-zone augmented safety loss (detailed in Section~\ref{deadzone}), and $\mathcal{L}_{SNR}$ is the SNR loss (introduced in Section~\ref{SNR}). We will elaborate these in the following sections.

\section{Constraint Inference with Dead Zone}
\label{deadzone}

\subsection{Cost function learning}

Given a preference dataset $\mathcal{D}=\{(\tau_1,\tau_2,\mu_1,\mu_2,\epsilon_1,\epsilon_2)\}$, we leverage the BT model to learn a cost function $\hat c (s,a)$. For two trajectories $\tau_1$, $\tau_2$ and pairwise preference labels $\mu_1$, $\mu_2$, the pairwise loss $\mathcal{L}_{pair}$ is formulated in Eq.~\ref{eq:likelihood} to capture the relative cost ranking.

Safety label $\epsilon$ can be used as a special case of pairwise preference label, because it relates to a pairwise comparison made between the given trajectory $\tau$ and a hypothetical threshold trajectory $\tau_{th}$, which has true cost $C(\tau_{th})=d$ and estimated cost $\hat C(\tau_{th})=0$. Therefore, the probability that $\tau$ is considered safe can be expressed as the pairwise preference between $\tau_{th}$ and $\tau$:
\begin{align}
    \mathbb{\hat P}(\tau \text{ is safe})=\mathbb{\hat P }(\tau\succ\tau_{th})&=\sigma(\hat C(\tau_{th})-\hat C(\tau))\notag  \\
    &=\sigma(-\hat C(\tau))
    \label{eq:bt2}
\end{align}
Similar to Eq.~\ref{eq:likelihood}, the safety label $\epsilon$ and Eq.~\ref{eq:bt2} can be applied to define a safety loss, facilitating the BT model to judge the safety of trajectory.
\begin{equation}
  \fontsize{8.5}{7}\selectfont
  \begin{aligned}
    \mathcal{L}_{safe}=&-\mathbb{E}_\mathcal{D} \Big[ \epsilon\log\mathbb{\hat P}(\tau \text{ is safe})+(1-\epsilon)\log\mathbb{\hat P}(\tau \text{ is unsafe})\Big] \\
    =&-\mathbb{E}_\mathcal{D} \Big[ \epsilon\log\sigma(-\hat C(\tau))+(1-\epsilon)\log\sigma(\hat C(\tau))\Big]
    \label{eq:safeloss0}
  \end{aligned}
\end{equation}
A well-trained BT model should perfectly rank the costs of all trajectories, and yield $\hat C(\tau)\le 0$ for all safe trajectories with safety label $\epsilon=1$, and $\hat C(\tau)>0$ otherwise.

However, such perfect ranking and classification of trajectory costs may not guarantee that the learned constraint is aligned with the true underlying constraint. As stated in Section~\ref{introduction}, BT model is designed for rank learning and it struggles with capturing absolute values or distribution. 

As shown in Figure~\ref{fig:distribution}, the difference between the true heavy-tailed distribution and the relatively symmetric one, learned by the BT model, may lead to an underestimation of the cost expectation. Since the constraint is modeled in the expectation form $\mathbb{E}[C]\le d$, this underestimation may further loosen the safety constraint and eventually fail to guide the policy learning for true safety requirements.

To address this constraint misalignment issue, we turn to use dead zone to mitigate the distribution difference. 

The dead zone is a region around the threshold with upper bound $\delta>0$. Instead of merely learning that unsafe trajectories with $\epsilon=0$ should have $\hat C > 0$, we encourage the model to induce higher costs $\hat C > \delta$ for unsafe trajectories, pushing cost on the right side of the threshold to distribute further. With the above updates, the safety loss is refined as follows:
\begin{equation}
  \fontsize{8.5}{7}\selectfont
  \begin{aligned}
    \mathcal{L}_{safe}^{DZ}=&-\mathbb{E}_\mathcal{D} \Big[ \epsilon\log\mathbb{\hat P}(\tau \text{ is safe})+(1-\epsilon)\log\mathbb{\hat P}(\tau \text{ is unsafe})\Big] \\
    =&-\mathbb{E}_\mathcal{D} \Big[ \epsilon\log\sigma(-\hat C(\tau))+(1-\epsilon)\log\sigma(\hat C(\tau)-\delta)\Big]
    \label{eq:safeloss}
  \end{aligned}
\end{equation}
The dead zone upper bound $\delta$ assigns higher costs to unsafe trajectories, producing a more heavy-tailed cost distribution. This mitigates the underestimation of expected cost for better constraint alignment.

\subsection{Theoretical Analysis of Dead Zone}

In this section, we present a theoretical analysis demonstrating how our dead-zone loss recovers the heavy-tailed cost distribution. This framework establishes a progression from micro-optimization to macro-statistics: Lemma~\ref{prop:gradient} proves the gradient advantage of the dead-zone loss, which Theorem~\ref{theorem:mean_shift} propagates into multi-step dominance. Finally, Corollary~\ref{corollary:heavy_tail} leverages this cost dominance to guarantee the recovery of the macroscopic heavy-tailed distribution.

\begin{lemma}
\label{prop:gradient}
For any unsafe trajectory $\tau$ with safety label $\epsilon=0$ and estimated cost $\hat C(\tau)$, the gradient provided by the Dead Zone loss $\mathcal{L}_{safe}^{DZ}$ in Eq.~\ref{eq:safeloss} is strictly more negative than that provided by the original loss $\mathcal{L}_{safe}$ in Eq.~\ref{eq:safeloss0}, i.e., $\nabla_{\hat C(\tau)}\mathcal{L}_{safe}^{DZ} < \nabla_{\hat C(\tau)}\mathcal{L}_{safe} < 0$.
\end{lemma}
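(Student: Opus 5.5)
Our plan is to compute both per-sample gradients explicitly and compare them using the monotonicity of the sigmoid. We fix an unsafe trajectory $\tau$ with $\epsilon=0$ and write $x=\hat C(\tau)$. Only the $(1-\epsilon)$ term survives in each loss, so the per-sample contributions reduce to
\begin{align*}
\ell_{safe}(x)=-\log\sigma(x),\qquad \ell_{safe}^{DZ}(x)=-\log\sigma(x-\delta).
\end{align*}
The expectation over $\mathcal{D}$ is linear, and $\tau$ enters only through its own term. The gradient with respect to $\hat C(\tau)$ is therefore this per-sample derivative, times a positive constant weight (the empirical frequency of $\tau$), and a positive weight does not affect strict inequalities.

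Next we use the identity $\frac{d}{dz}\log\sigma(z)=1-\sigma(z)=\sigma(-z)$. This gives
\begin{align*}
\nabla_{x}\ell_{safe}=-\sigma(-x),\qquad \nabla_{x}\ell_{safe}^{DZ}=-\sigma(\delta-x).
\end{align*}
The right-hand inequality $\nabla_x\ell_{safe}<0$ is immediate, since $\sigma$ takes values in $(0,1)$ and so $-\sigma(-x)<0$ for every finite $x$.

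For the left-hand inequality, we need $-\sigma(\delta-x)<-\sigma(-x)$, which is equivalent to $\sigma(\delta-x)>\sigma(-x)$. This holds because $\sigma$ is strictly increasing and $\delta-x>-x$ whenever $\delta>0$. Strict monotonicity follows from $\sigma'(z)=\sigma(z)(1-\sigma(z))>0$. Chaining the two inequalities gives $\nabla\mathcal{L}^{DZ}_{safe}<\nabla\mathcal{L}_{safe}<0$, uniformly in $\hat C(\tau)$.

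We do not expect a real obstacle; the argument is essentially a two-line calculus check. The one step needing care is the bookkeeping that reduces the gradient of the expected loss to the per-sample derivative, since $\hat C(\tau)$ is treated as a free scalar rather than a function of network parameters. If the statement were instead read with respect to the parameters $\theta$ of $\hat c$, we would apply the chain rule. The scalar factor $\partial\ell/\partial\hat C$ computed above would then multiply $\nabla_\theta\hat C(\tau)$ identically in both losses, so the comparison carries over along that direction.
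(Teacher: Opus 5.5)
Your proposal is correct and takes the same route as the paper: both compute the per-sample derivatives $-\sigma(-\hat C(\tau))$ and $-\sigma(\delta-\hat C(\tau))$ and conclude from the strict monotonicity of $\sigma$ with $\delta>0$. Your extra bookkeeping (the common positive weight from the expectation, and the chain rule to network parameters) is a harmless refinement that the paper leaves implicit.
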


\begin{proof}
See Appendix~\ref{appendix:gradient}.
\end{proof}
\vspace{-2ex}
Building on the gradient difference in Lemma \ref{prop:gradient}, we analyze the cumulative effect of multistep gradient descent over the training process of the cost model.

\begin{theorem}
\label{theorem:mean_shift}
  For any unsafe trajectory $\tau$, $\hat C_t^{DZ}(\tau)$ and $\hat C_t(\tau)$ are the cost learned by the dead zone loss $\mathcal{L}_{safe}^{DZ}$ and the original loss $\mathcal{L}_{safe}$ at training step $t$, respectively. Under gradient descent with the same learning rate $\eta$ and identical initialization $\hat C_0^{DZ}(\tau) = \hat C_0(\tau)$, for any $t>0$, we have strict dominance $\hat C_t^{DZ}(\tau) > \hat C_t(\tau)$.
\end{theorem}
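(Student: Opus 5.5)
We treat $\hat C(\tau)$ as the optimization variable (as in Lemma~\ref{prop:gradient}), so for an unsafe trajectory ($\epsilon=0$) the two per-trajectory losses are $\ell(x)=-\log\sigma(x)$ and $\ell^{DZ}(x)=-\log\sigma(x-\delta)$. Gradient descent with step size $\eta$ then produces the scalar recursions
\begin{align*}
\hat C_{t+1}&=\hat C_t+\eta\bigl(1-\sigma(\hat C_t)\bigr), \\
\hat C_{t+1}^{DZ}&=\hat C_t^{DZ}+\eta\bigl(1-\sigma(\hat C_t^{DZ}-\delta)\bigr).
\end{align*}
These follow from $\frac{d}{dx}[-\log\sigma(x)]=-(1-\sigma(x))$. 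We prove $\hat C_t^{DZ}>\hat C_t$ for all $t\ge1$ by induction on $t$.

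\emph{Base case.} Since $\hat C_0^{DZ}=\hat C_0$, we have
\[
\hat C_1^{DZ}-\hat C_1=\eta\bigl(\sigma(\hat C_0)-\sigma(\hat C_0-\delta)\bigr).
\]
This is strictly positive because $\sigma$ is strictly increasing and $\delta>0$. This is exactly the content of Lemma~\ref{prop:gradient} evaluated at the common initial point.

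\emph{Inductive step.} Suppose $\hat C_t^{DZ}>\hat C_t$. We must show that the gap does not close after one more step. Write $g(x)=1-\sigma(x)$, which is strictly decreasing. Then
\[
\hat C_{t+1}^{DZ}-\hat C_{t+1}=\bigl(\hat C_t^{DZ}-\hat C_t\bigr)+\eta\bigl(g(\hat C_t^{DZ}-\delta)-g(\hat C_t)\bigr).
\]
The second term may be negative when $\hat C_t^{DZ}-\delta>\hat C_t$, so comparing gradients at different points is not enough. This is the main obstacle.

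To handle it, we use the update maps $\phi(x)=x+\eta g(x)$ and $\phi^{DZ}(x)=x+\eta g(x-\delta)$.

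\emph{Monotonicity of the maps.} We have $\phi'(x)=1-\eta\sigma(x)(1-\sigma(x))\ge 1-\eta/4$. Under the standard small-step condition $\eta<4$, which is implied by the usual $L$-smoothness requirement $\eta\le 1/L$ with $L=1/4$, $\phi$ is strictly increasing. The same bound holds for $\phi^{DZ}$.

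\emph{Pointwise domination.} For every $x$, $\phi^{DZ}(x)-\phi(x)=\eta\bigl(\sigma(x)-\sigma(x-\delta)\bigr)>0$, again by Lemma~\ref{prop:gradient}.

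\emph{Chaining.} Combining the two facts,
\[
\hat C_{t+1}^{DZ}=\phi^{DZ}(\hat C_t^{DZ})>\phi(\hat C_t^{DZ})>\phi(\hat C_t)=\hat C_{t+1}.
\]
The first inequality is the pointwise domination, and the second uses the monotonicity of $\phi$ together with the inductive hypothesis. This closes the induction.

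If one wants a version without the step-size assumption, one could instead track trajectories under gradient flow. There, two ODE solutions with $\dot x=g(x-\delta)>g(x)$ at every point and the same initial condition cannot cross, by a standard comparison argument. The discrete statement then requires the step-size condition, which we state explicitly as the assumption under which the theorem holds.
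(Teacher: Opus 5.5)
Your proof is correct and follows essentially the same route as the paper: pointwise domination of the update maps ($F^{DZ}>F$, your $\phi^{DZ}>\phi$), strict monotonicity of an update map under $\eta<4$, and induction from the common initialization. The only cosmetic difference is the chaining order: you chain through monotonicity of the original map, $\phi^{DZ}(\hat C_t^{DZ})>\phi(\hat C_t^{DZ})>\phi(\hat C_t)$, whereas the paper chains through monotonicity of $F^{DZ}$, $F^{DZ}(\hat C_k^{DZ})>F^{DZ}(\hat C_k)>F(\hat C_k)$. Both orders work equally well, and you are right to state the step-size condition explicitly as an assumption, since the theorem statement itself omits it.
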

\vspace{-3ex}

\begin{proof}
 We analyze the update dynamics using mathematical induction. Detailed proof is provided in Appendix~\ref{appendix:mean_shift}.
\end{proof}

Finally, we connect this instance-level cost shift to the overall distributional change induced by the dead zone.
\begin{corollary}
\label{corollary:heavy_tail}
The cost distribution learned by the dead zone loss $\mathcal{L}_{safe}^{DZ}$ has a strictly heavier right tail than that learned by the original loss $\mathcal{L}_{safe}$. Specifically, for any $z>0$ (within the support), the tail probability satisfies $\mathbb{P}(\hat C^{DZ} \ge z) > \mathbb{P}(\hat C \ge z)$.
\end{corollary}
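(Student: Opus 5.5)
The plan is to deduce the corollary from the pointwise dominance in Theorem~\ref{theorem:mean_shift}, using a coupling argument. Both learned cost distributions are pushforwards of the same trajectory distribution over $\mathcal{D}$. We therefore view $\hat C^{DZ}$ and $\hat C$ as two random variables on a common probability space: a trajectory $\tau$ is drawn once and then evaluated under each trained model at the same training step $t>0$. The tail statement then follows from comparing the two variables trajectory by trajectory.

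\textbf{Step 1: pointwise comparison on each class.}
For unsafe trajectories ($\epsilon=0$), Theorem~\ref{theorem:mean_shift} gives $\hat C_t^{DZ}(\tau) > \hat C_t(\tau)$. For safe trajectories ($\epsilon=1$), the two losses in Eq.~\ref{eq:safeloss0} and Eq.~\ref{eq:safeloss} share the identical term $-\log\sigma(-\hat C(\tau))$. Under the same initialization and learning rate, their gradient-descent trajectories therefore coincide, so $\hat C_t^{DZ}(\tau) = \hat C_t(\tau)$. (This is in the same per-trajectory idealized setting used for Theorem~\ref{theorem:mean_shift}.) Hence $\hat C^{DZ} \ge \hat C$ almost surely, with strict inequality on the unsafe set $U$.

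\textbf{Step 2: inclusion of tail events.}
Fix $z>0$. From Step 1, $\{\hat C \ge z\} \subseteq \{\hat C^{DZ} \ge z\}$, which gives the weak inequality $\mathbb{P}(\hat C^{DZ}\ge z)\ge \mathbb{P}(\hat C\ge z)$. Since $z>0$ and safe trajectories are identical under both models, the difference of the two events consists only of unsafe trajectories:
\[
\mathbb{P}(\hat C^{DZ}\ge z)-\mathbb{P}(\hat C\ge z)=\mathbb{P}\bigl(\tau\in U,\ \hat C(\tau)<z\le \hat C^{DZ}(\tau)\bigr).
\]

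\textbf{Step 3: strictness (the main obstacle).}
Pointwise strict dominance alone does not give a strictly larger tail probability at every $z$. For example, with an atomic distribution the interval $[\hat C, \hat C^{DZ})$ might contain no mass near $z$. I would handle this through the qualifier ``within the support'' together with a mild regularity assumption: the unsafe costs $\hat C(\tau)$ under the original loss have a distribution on $U$ with positive density on the relevant range, which holds, for instance, for a continuous trajectory distribution and a smooth cost network. Under this assumption the gap $g(\tau)=\hat C^{DZ}(\tau)-\hat C(\tau)>0$ is strictly positive on $U$. Take $z$ in the support of $\hat C^{DZ}$ on $U$. Then the set $\{\tau\in U: z-g(\tau)\le \hat C(\tau)<z\}$ has positive probability, because it contains a neighborhood, in cost value, of points whose dead-zone cost equals $z$. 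This makes the difference in Step 2 strictly positive.

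If full generality is not achievable, I would state the strict result for $z$ in the interior of the support of $\hat C^{DZ}$ restricted to $U$, and keep the weak inequality everywhere else. Either way, the corollary reduces to Theorem~\ref{theorem:mean_shift} plus first-order stochastic dominance, and the only delicate point is this measure-theoretic strictness.
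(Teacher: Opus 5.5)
Your proposal follows the same route as the paper's appendix proof. Both couple the two learned costs on a common space of trajectories, invoke the pointwise dominance of Theorem~\ref{theorem:mean_shift}, and pass to tail probabilities. You are more careful than the paper at exactly the points where its proof is loose.

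The paper asserts that pointwise strict dominance gives ``strict first-order stochastic dominance'', meaning $\mathbb{E}[u(X)]>\mathbb{E}[u(Y)]$ for \emph{every} non-decreasing $u$, and then plugs in $u=\mathbb{I}(\cdot\ge z)$. That is false as stated: a constant $u$, or a $z$ beyond all unsafe mass, gives equality. The paper also ignores safe trajectories, which Theorem~\ref{theorem:mean_shift} does not cover, even though the corollary speaks of the whole cost distribution.

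Your Step~1 (the $\epsilon=1$ terms coincide, so safe costs agree in the per-trajectory idealization) and your Step~2 (event inclusion, with the difference living on $U$) repair both issues. Your Step~3 is also right that strictness for every $z$ needs an extra hypothesis. So the statement as written cannot be proved in full generality, only in a restricted form like yours.

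Two details in Step~3 should be tightened.

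First, $z$ merely in the support of $\hat C^{DZ}$ on $U$ is not enough. If $z$ is the right endpoint of that support and carries no atom, both tails on $U$ vanish and the difference is $0$. Your ``fallback'' (interior of the support) is therefore the correct formulation, not a weakening.

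Second, ``contains a neighborhood, in cost value'' tacitly needs the gap $g$ to be bounded away from $0$ near the relevant values. An arbitrary coupling with $g(\tau)\to 0$ as $\hat C^{DZ}(\tau)\downarrow z$ would defeat it. In the idealized setting of Theorem~\ref{theorem:mean_shift} you get this for free, and in fact an exact formula:
\begin{itemize}
\item The derivatives of $F$ and $F^{DZ}$ are $1-\eta\,\sigma(x)\bigl(1-\sigma(x)\bigr)$, evaluated at $x=\hat C$ and $x=\hat C-\delta$ respectively. Both are positive for $\eta<4$, and both maps are continuous bijections of $\mathbb{R}$.
\item Hence $\hat C_t=H(\hat C_0)$ and $\hat C_t^{DZ}=G(\hat C_0)$, where $H$ and $G$ are the $t$-fold iterates of $F$ and $F^{DZ}$. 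Both are strictly increasing, and $G>H$ pointwise.
\item Consequently $G^{-1}(z)<H^{-1}(z)$, and
\[
\mathbb{P}(\hat C_t^{DZ}\ge z)-\mathbb{P}(\hat C_t\ge z)=\mathbb{P}\bigl(\tau\in U,\ G^{-1}(z)\le \hat C_0(\tau)<H^{-1}(z)\bigr).
\]
\end{itemize}
This identity characterizes exactly when the inequality is strict. The right-hand side is positive, for instance, when $z$ is interior to the support of $\hat C_t^{DZ}$ on $U$. It is also positive under your density assumption on $\hat C_t$ restricted to $U$ just to the left of $z$, since $H(G^{-1}(z))<z$.
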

\vspace{-3ex}
\begin{proof}
  See Appendix~\ref{appendix:corollary}.
\end{proof}
\vspace{-1ex}
The above analysis shows that the dead zone upper bound $\delta$ encourages the learned cost distribution to have a heavier right tail, mitigating the underestimation of expected cost. This will facilitate better constraint alignment and policy optimization in downstream safe RL tasks.

\section{Loss Design based on Signal-to-Noise Ratio}
\label{SNR}

The dead zone mechanism from Section~\ref{deadzone} is primarily designed to align the constraint with the true underlying constraint. However, there is evidence showing that a cost model with a flattened or poorly differentiated cost landscape may hinder effective policy gradient updates, leading to suboptimal performance or slow convergence\cite{razin2025makes}. This section is focused on how to induce cost differentiation and send informative signals for efficient policy optimization.

Specifically, we encourage the cost model to search for better differentiation by adding a novel loss term based on Signal-to-Noise Ratio~(SNR). The SNR is a well-known measure defined by the signal power relative to the noise power\cite{johnson2006signal}. Here, we define the SNR using a batch of data $\{(\tau,\mu)\}$ from the dataset $\mathcal{D}$ as follows:
\begin{equation}
  \mathcal{L}_{SNR}=-\zeta \frac{Var(\hat C(\tau)) }{\mathcal{H}(p(\mu))}
  \label{eq:snr}
\end{equation}
where $\zeta$ is a weighting coefficient. $Var(\hat C(\tau))$ is the variance of the estimated cost of the batch, which is the signal power because it reflects the degree of differentiation in the cost outputs. $\mathcal{H}(p(\mu))=-\sum p(\mu) \ln p(\mu)$ is the entropy of the preference label distribution $p(\mu)$, which quantifies the uncertainty in the preference data and serves as the noise power, because larger entropy indicates greater disorder and less information provided by the labels.

Minimizing the SNR loss encourages the model to induce sufficient cost variance. It thus provides a more discriminative cost signal for policy optimization, with a benefit of policy exploration. We will elaborate this further in Appendix~\ref{appendix:snr}.

Incorporating pairwise preference loss in Eq.~\ref{eq:likelihood}, safety loss in Eq.~\ref{eq:safeloss} and SNR loss in Eq.~\ref{eq:snr}, the overall loss function for preference-based constrained inference $\mathcal{L}_{PbCI}$ is defined in Eq.~\ref{eq:total_loss}. The overall loss function is optimized to learn a cost function $\hat c(s,a)$ that not only aligns well with the true underlying constraint but also provides informative signals for efficient policy optimization.

\section{Two-stage Training Strategy}

This section will propose a two-stage training strategy for the cost model, and then present the overall Preference-based Constrained Reinforcement Learning (PbCRL) algorithm and its theoretical convergence guarantee.

\subsection{Constraint Learning Details}
\label{two-stage}

Most preference-based RL methods jointly optimize the cost model and the policy online \cite{reddy2024safety,lee2021b,christiano2017deep}. Inspired by reward model training in language model alignment~\cite{ouyang2022training}, we instead propose a two-stage learning strategy. This approach leverages offline data for initial cost model training, followed by online finetuning during policy optimization, thereby lowering the burden of online labeling and enhancing constraint satisfaction.

\paragraph{Offline Pre-training} 
Standard online approaches require annotators to continuously provide feedback throughout the entire policy training procedure, resulting in a significant labeling burden that is often expensive and time-consuming. To mitigate this, our first stage leverages an offline preference dataset $\mathcal{D}$ collected in advance. The cost model $c_\psi(s,a)$, parameterized by $\psi$, is trained by the loss in Eq.~\ref{eq:total_loss} with fixed dead zone for stable training. 
\begin{align}
    \psi\leftarrow \psi-lr_\psi \nabla_\psi \mathcal{L}_{PbCI}(\psi)
    \label{eq:finetune}
\end{align}
This procedure leverages existing preference data without incurring online labeling. The offline pre-training phase aims to learn a general estimate of the cost function based on the available offline data.

\paragraph{Online Finetuning} The pre-trained cost model is then used for policy optimization in Eq.~\ref{eq:rlcl}. During this policy training phase, we select a comparatively small batch of generated trajectories for online labeling to finetune the cost model by Eq.~\ref{eq:finetune}.

Furthermore, the dead zone parameter fixed during pre-training may not optimally transfer to the online phase, due to the distribution shift between the offline dataset and the current policy's trajectories. To address this, we introduce an adaptive calibration mechanism that aligns the model's safety predictions with the empirical ground truth. 

For a batch of trajectories with online labels $\mathcal{B}=\{(\tau,\mu,\epsilon)\}$, while the ground truth cost values are unavailable, the violation rate (proportion of unsafe trajectories) provides a surrogate signal for alignment. We aim to match the predicted violation rate $\hat{\mathbb{P}}_{vio} = \frac{1}{|\mathcal{B}|} \sum \mathbb{I}(c_\psi(\tau) > 0)$ with the empirical violation rate $\mathbb{P}_{vio} = \frac{1}{|\mathcal{B}|} \sum \mathbb{I}(\epsilon=0)$. A discrepancy where $\hat{\mathbb{P}}_{vio} > \mathbb{P}_{vio}$ indicates an overestimation of risks, while the inverse suggests underestimation. We adaptively update the dead zone to minimize this mismatch:
\begin{align}
  \mathcal{L}_{\delta} = \| \hat{\mathbb{P}}_{vio} - \mathbb{P}_{vio} \|^2
  \label{eq:delta}
\end{align}
We perform gradient descent on $\mathcal{L}_{\delta}$ with respect to $\delta$ to update the dead zone. This process acts as a dynamic calibration, ensuring that the learned cost model maintains a consistent interpretation of safety with preference feedback as the policy evolves.

The two-stage learning phase leverages offline data for pre-training, reducing the need for costly online labeling. The adaptive finetuning of the cost model during policy training allows it to align the pre-trained cost function with the genuine safety requirements encountered in the training environment, thereby adaptively enhancing constraint satisfaction of the learned policy.

\subsection{Preference-based Constrained Reinforcement Learning (PbCRL)}

Based on these, we present the proposed algorithm, namely Preference-based Constrained Reinforcement Learning (PbCRL). PbCRL leverages an extended BT model to estimate safety constraint from preference. Policy optimization is then performed based on this learned constraint. In this work, we conduct policy optimization using the Lagrange multiplier method, a widely adopted approach in safe RL literature \cite{ray2019benchmarking,yang2021wcsac,li2025tilted}. The safe RL problem in Eq.~\ref{eq:rlcl} is transformed into a Lagrangian loss, with cost model parameter $\psi$, policy parameter $\theta$ and Lagrange multiplier $\lambda$:
\begin{align}
    \mathcal{L}(\psi,\theta,\lambda)=-\Big[\mathcal{J}^R(\pi_\theta)-\lambda\mathcal{J}^{C_\psi}(\pi_\theta)\Big]
    \label{eq:lagrange}
\end{align}
The overall algorithm of the proposed PbCRL is summarized in Algorithm~\ref{alg:PbCRL}.

\begin{algorithm}[htb]
\caption{Preference-based Constrained Reinforcement Learning (PbCRL)}
\label{alg:PbCRL}
\begin{algorithmic}[1]
\REQUIRE Preference dataset $\mathcal{D}$, finetune interval $K$, 
\STATE \textbf{Cost Model Pre-training:}
\STATE \textbf{Initialize} cost model $c_\psi(s,a)$, dead zone $\delta$
\FOR{each epoch}
    \STATE \quad Sample a batch of preference data from $\mathcal{D}$
    \STATE \quad Update cost model parameter $\psi$ by Eq.~\ref{eq:finetune}

\ENDFOR
\STATE \textbf{ }

\STATE \textbf{Policy Optimization:}
\STATE \textbf{Initialize} policy $\pi_\theta$, Lagrange multiplier $\lambda$
\FOR{$n=0,1,2,\ldots,N$}
    \STATE Collect trajectories using current policy $\pi_\theta$
    \IF{$n \bmod K = 0$}
      \STATE \textbf{Cost Model Finetuning:}
      \STATE Sample a small batch of online data $\mathcal{B}$, update $\mathcal{D}$
      \STATE Update dead zone $\delta$ by Eq.~\ref{eq:delta}
      \FOR{each epoch}
        \STATE \quad Sample a batch of preference data from $\mathcal{D}$
        \STATE \quad Update cost model parameter $\psi$ by Eq.~\ref{eq:finetune}
      \ENDFOR
    \ENDIF
    \STATE Update policy $\pi_\theta$ by $\theta \leftarrow \theta - lr_\theta \nabla_\theta \mathcal{L}(\psi,\theta,\lambda)$
    \STATE Update multiplier $\lambda$ by $\lambda \leftarrow \lambda + lr_\lambda \nabla_\lambda \mathcal{L}(\psi,\theta,\lambda)$
\ENDFOR
\end{algorithmic}
\end{algorithm}

\subsection{Convergence Analysis of PbCRL}
\label{convergence}

\begin{assumption}
\label{assump:convergence}
The following conditions are adopted as standard assumptions in Safe RL literature\cite{borkar2008stochastic,chow2018risk,li2025tilted}:
\begin{enumerate}
    \item \textbf{Continuous:} The objective $\mathcal{L}(\psi,\theta,\lambda)$ in Eq.~\ref{eq:lagrange} is continuously differentiable w.r.t. $\theta$, and its gradient $\nabla_\theta \mathcal{L}(\psi,\theta,\lambda)$ is Lipschitz continuous w.r.t. $\psi$, $\theta$ and $\lambda$.
    
    \item \textbf{Step Sizes and Timescale Separation:} The learning rates $lr_\psi$, $lr_\theta$ and $lr_\lambda$ are positive, nonsummable, and square-summable. Additionally, they satisfy timescale separation conditions: $lr_\lambda(t) = o(lr_\theta(t)), lr_\theta(t) = o(lr_\psi(t))$.

    \item \textbf{Stability of Fast Dynamics:} For any fixed $\theta$ and $\lambda$, the Ordinary Differential Equation (ODE) governing the cost model finetuning $\dot \psi = -\nabla_\psi \mathcal{L}_{PbCI}(\psi)$ possesses a set of locally asymptotically stable equilibria, and the iterates $\{\psi_k\}$ remain almost surely within the domain of attraction of this set.
\end{enumerate}
\end{assumption}
\vspace{-1ex}
With these assumptions, we present the convergence theorem of PbCRL as follows:

\begin{theorem}
\label{theo:convergence}
Under Assumptions~\ref{assump:convergence}, PbCRL with iterations $(\psi,\theta,\lambda)$ converges almost surely to a local optimal policy for the safe RL problem in Eq.~\ref{eq:rlcl}.
\end{theorem}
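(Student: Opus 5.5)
The proof is a standard three-timescale stochastic approximation argument in the style of \citet{borkar2008stochastic} and \citet{chow2018risk}. Assumption~\ref{assump:convergence}(2) gives the ordering $lr_\lambda = o(lr_\theta)$ and $lr_\theta = o(lr_\psi)$. So the cost model parameter $\psi$ is the fastest iterate, the policy parameter $\theta$ is intermediate, and the multiplier $\lambda$ is the slowest. The plan is to analyze the three recursions one at a time, from fastest to slowest. At each level, the slower iterates are treated as quasi-static, and the faster iterates are treated as already equilibrated to a function of the slower ones.

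\textbf{Step 1 (fast timescale, $\psi$).} I first write the update in Eq.~\ref{eq:finetune} as a noisy discretization $\psi_{k+1}=\psi_k - lr_\psi(k)\big(\nabla_\psi \mathcal{L}_{PbCI}(\psi_k)+M_{k+1}\big)$. Here $M_{k+1}$ is a martingale-difference noise term coming from minibatch sampling of $\mathcal{D}$. I take its conditional second moment to be bounded, since the losses in Eqs.~\ref{eq:likelihood}, \ref{eq:safeloss} and \ref{eq:snr} have bounded gradients on bounded iterates.

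On this timescale, $\theta$ and $\lambda$ move with relative step sizes $lr_\theta/lr_\psi\to 0$ and $lr_\lambda/lr_\psi\to 0$, so they are asymptotically frozen. The square-summable steps, together with Assumption~\ref{assump:convergence}(3), then let me invoke Borkar's ODE lemma. It gives $\psi_k \to \psi^*(\theta,\lambda)$ almost surely, where $\psi^*$ lies in the locally asymptotically stable equilibrium set of $\dot\psi=-\nabla_\psi\mathcal{L}_{PbCI}$. The dead-zone update of Eq.~\ref{eq:delta} is executed only every $K$ iterations, so I would absorb $\delta$ into $\psi$ as a fast-timescale parameter. That coupling is the point requiring care, discussed below.

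\textbf{Step 2 (intermediate timescale, $\theta$).} With $\psi_k$ tracking $\psi^*(\theta_k,\lambda_k)$, the $\theta$-recursion is a perturbed discretization of $\dot\theta=-\nabla_\theta\mathcal{L}(\psi^*(\theta,\lambda),\theta,\lambda)$. The perturbation $\nabla_\theta\mathcal{L}(\psi_k,\cdot)-\nabla_\theta\mathcal{L}(\psi^*,\cdot)$ vanishes because of the Lipschitz continuity in Assumption~\ref{assump:convergence}(1).

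For fixed $\lambda$, $\mathcal{L}$ serves as a Lyapunov function for this gradient flow. By LaSalle, the trajectories converge to the set of stationary points $\theta^*(\lambda)$, and I would restrict attention to local minima. Borkar's two-timescale theorem then gives $\theta_k\to\theta^*(\lambda_k)$ almost surely.

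\textbf{Step 3 (slow timescale, $\lambda$).} The projected ascent $\lambda\leftarrow[\lambda+lr_\lambda \mathcal{J}^{C_{\psi^*}}(\pi_{\theta^*(\lambda)})]_+$ tracks the ODE $\dot\lambda=\Gamma_\lambda\big(\mathcal{J}^{C}(\pi_{\theta^*(\lambda)})\big)$. I would use the Lagrangian evaluated at $\theta^*(\lambda)$ as the Lyapunov function, following \citet{chow2018risk}; by an envelope-type argument, its derivative along the flow is nonnegative. The iterates then converge to $\lambda^*$, where either $\lambda^*=0$ and the constraint holds, or $\lambda^*>0$ and $\mathcal{J}^{C_\psi}=0$.

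The limit $(\psi^*,\theta^*,\lambda^*)$ is therefore a local saddle point of Eq.~\ref{eq:lagrange}. By standard saddle-point and KKT arguments, $\pi_{\theta^*}$ is a locally optimal feasible policy for Eq.~\ref{eq:rlcl}.

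\textbf{Main obstacle.} The hardest part is Step 1. The cost-model target is not stationary: the online dataset $\mathcal{D}$ grows with trajectories from the current policy, and $\delta$ is updated through a non-differentiable indicator in Eq.~\ref{eq:delta}. My first approach is to treat the data distribution as a function of the slower variable $\theta$, which is quasi-static on the $\psi$ timescale. I would smooth the indicator, for instance by replacing it with a sigmoid, so that the combined $(\psi,\delta)$ flow falls under Assumption~\ref{assump:convergence}(3).

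A second concern is the boundedness of the iterates, which Borkar's theorem requires. The SNR term in Eq.~\ref{eq:snr} rewards variance without bound, so boundedness is not automatic. I would either assume it explicitly, which is standard, or add a projection onto a compact set and use the projected-ODE version of the theorem.
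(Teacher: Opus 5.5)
Your architecture matches the paper's proof: three timescales with $\psi$ fastest, $\theta$ intermediate and $\lambda$ slowest, Borkar's ODE method for $\psi$, the two-timescale theorem for $\theta$ with $\psi$ frozen at $\psi^*$, and the primal--dual argument of \citet{chow2018risk} for $\lambda$. Your remarks on boundedness (projection) and on $\delta$ go beyond anything the paper addresses. However, two steps do not go through as written.

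\textbf{Step 2 assumes the conclusion.} LaSalle and the two-timescale theorem only give convergence of $\theta_k$ to the set of stationary points of $\dot\theta=-\nabla_\theta\mathcal{L}$. Saying ``I would restrict attention to local minima'' assumes exactly what the theorem asserts. The paper closes this step with a saddle-avoidance claim: stochastic gradient iterates almost surely do not converge to strict saddles. Making this step rigorous, in your version or the paper's, needs something not contained in Assumption~\ref{assump:convergence}. One option is a nonconvergence-to-unstable-equilibria theorem in the style of Pemantle or Brandi\`ere--Duflo. That requires the martingale noise to be nondegenerate along unstable directions, and every non-minimizing stationary point to be a strict saddle. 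The other option is an analogue of Assumption~\ref{assump:convergence}(3) for $\theta$: the iterates stay in the domain of attraction of an isolated local minimizer $\theta^*(\lambda)$ that depends Lipschitz-continuously on $\lambda$. Without one of these you only obtain convergence to a stationary (KKT) point.

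\textbf{Your fix for the online data breaks your own Lyapunov argument.} If the dataset is modelled as a function of $\theta$, then $\mathcal{L}_{PbCI}=\mathcal{L}_{PbCI}(\psi;\theta)$, and $\psi^*(\theta,\lambda)$ genuinely varies with $\theta$. Along $\dot\theta=-\partial_\theta\mathcal{L}(\psi^*(\theta,\lambda),\theta,\lambda)$, the candidate $V(\theta)=\mathcal{L}(\psi^*(\theta,\lambda),\theta,\lambda)$ satisfies
\[
\dot V=-\|\partial_\theta\mathcal{L}\|^2-\big\langle (D_\theta\psi^*)^{\top}\partial_\psi\mathcal{L},\,\partial_\theta\mathcal{L}\big\rangle .
\]
The cross term has no sign. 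Since $\psi^*$ minimizes $\mathcal{L}_{PbCI}$ and not $\mathcal{L}$, the term $\partial_\psi\mathcal{L}=\lambda\,\partial_\psi\mathcal{J}^{C_\psi}(\pi_\theta)$ need not vanish, so there is no envelope cancellation. The coupled policy/cost-model dynamics can then cycle: the policy exploits regions the cost model underrates, the cost model is refit, and so on. The same uncancelled term also spoils the envelope argument you use for $\lambda$ in Step 3.

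The paper sidesteps this only because its cost-model ODE $\dot\psi=-\nabla_\psi\mathcal{L}_{PbCI}(\psi)$ has no $\theta$-dependence. Then $\psi^*$ is constant and $\nabla_\theta\mathcal{L}(\psi^*,\theta,\lambda)$ is a true gradient of $\mathcal{L}(\theta,\lambda)$. You must either adopt that simplification (a fixed objective for the cost model) or replace the Lyapunov argument with a direct stability assumption on the coupled $\theta$-ODE.
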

\vspace{-2ex}
\begin{proof}
We utilize multi-time scale stochastic approximation to prove the convergence. Details are in Appendix~\ref{appendix:convergence}.
\end{proof}
\vspace{-2ex}
Theorem~\ref{theo:convergence} shows that under standard assumptions, the proposed PbCRL algorithm converges to a local optimum of the safe RL problem in Eq.~\ref{eq:rlcl}.

\section{Experiments}
\label{Experiments}

In this section, we empirically evaluate the proposed PbCRL on multiple benchmarks to answer the following research questions:
\vspace{-2ex}
\begin{enumerate}
    \item[\textbf{Q1:}] Can PbCRL effectively learn safety constraints from preference data and achieve superior performance compared to state-of-the-art baselines?
    \item[\textbf{Q2:}] Why does PbCRL work? How do the proposed techniques contribute to the overall performance of PbCRL?
    \item[\textbf{Q3:}] Can PbCRL generalize to more complex scenarios?
\end{enumerate}
\vspace{-2ex}
\paragraph{Benchmarks} We evaluate PbCRL on three distinct domains with varied safety challenges: (1) Robotic control in Safety Gymnasium \cite{ji2023safety}; (2) Autonomous driving; (3) Language Model alignment. Details for each domain are provided in the respective subsections.

\paragraph{Baselines and Metrics}
We compare PbCRL against state-of-the-art baselines in Safe RL with constraint learned from preference. \textbf{RLSF} \cite{reddy2024safety} estimates a binary cost function using a BT model trained from segment-level human feedback. Safe RLHF \cite{daisafe}, originally for language model alignment, is adapted for our continuous control tasks using its standard BT-based cost learning. We refer to this baseline as \textbf{PPO-BT} to highlight its reliance on the standard BT model. Additionally, we include \textbf{PPO-Lag} \cite{ray2019benchmarking} trained with ground-truth costs as an oracle upper bound. A safe RL approach with well-aligned constraints should approach the performance of PPO-Lag. Performance is evaluated using average episode \textbf{return} and ground-truth \textbf{cost} over 5 random seeds. Implementation details are provided in Appendix~\ref{implementation}.

\subsection{Robotic tasks in Safety Gymnasium}

\paragraph{Environment Setup}
We evaluate PbCRL on robotic control tasks in Safety Gymnasium \cite{ji2023safety}, a standard safe RL benchmark. Locomotion tasks (HalfCheetah, Walker2d, Humanoid) involve velocity constraints, while the more challenging navigation tasks (Goal, Push, Button) require reaching targets while avoiding collisions with obstacles. Cost thresholds are scaled with task difficulty following standard Safe RL protocols. Full details are in Appendix~\ref{appendix:env}.

\begin{table*}[t]
  \centering
  \caption{Empirical results on Safety Gymnasium tasks. The best method using learned constraints is highlighted in \textbf{bold}.}
  \scriptsize
  \renewcommand{\arraystretch}{1.0}
  \setlength{\tabcolsep}{15pt}
  \begin{tabular}{lccccc}
    \hline
    \multirow{1}{*}{Tasks (Threshold)} & \multirow{1}{*}{Metrics} & PPO-Lag (Oracle) & PbCRL (Ours) & RLSF & PPO-BT\\
    \hline

          \multirow{2}{*}{HalfCheetah (5)} & Return   & $2619\pm124$   & $\bm{2367\pm138}$  & $2084\pm126$    & $2494\pm195$ \\
         & Cost & $4.82\pm0.91$  & $\bm{4.66\pm1.03}$ & $3.26\pm0.78$   & $5.58\pm2.14$ \\
    \hline
    
         \multirow{2}{*}{Walker2d (5)} & Return   & $3057\pm142$   & $2659\pm133$  & $2415\pm120$    & $\bm{2807\pm156}$ \\
         & Cost    & $4.88\pm1.02$  & $4.62\pm0.97$ & $3.66\pm0.85$   & $\bm{4.91\pm1.11}$ \\
    \hline
    
         \multirow{2}{*}{Humanoid (5)} & Return   & $5876\pm210$   & $\bm{5247\pm149}$  & $4816\pm180$    & $5948\pm225$ \\
         & Cost    & $5.03\pm1.93$  & $\bm{4.85\pm1.21}$ & $4.05\pm1.12$   & $5.89\pm2.05$ \\
    \hline
      
         \multirow{2}{*}{Goal (20)} & Return   & $23.50\pm1.14$ & $\bm{23.41\pm1.08}$ & $5.78\pm1.07$  & $24.26\pm0.99$ \\
         & Cost     & $20.70\pm1.91$ & $\bm{18.50\pm2.03}$ & $8.20\pm2.21$   & $26.97\pm2.62$ \\
    \hline
       
         \multirow{2}{*}{Push (35)} & Return   & $4.77\pm0.67$  & $\bm{3.20\pm0.38}$  & $1.07\pm0.35$   & $4.09\pm0.57$ \\
         & Cost     & $36.17\pm5.00$ & $\bm{34.86\pm3.13}$ & $20.05\pm3.56$  & $41.85\pm5.65$ \\
    \hline
    
         \multirow{2}{*}{Button (40)} & Return   & $8.96\pm1.06$  & $\bm{8.73\pm0.78}$      & $-2.01\pm0.37$  & $9.01\pm1.51$ \\
         & Cost     & $38.25\pm5.20$ & $\bm{37.51\pm5.65}$     & $4.26\pm3.33$   & $50.78\pm8.10$ \\
    \hline
  \end{tabular}
  \label{tab:results}
\end{table*}

\begin{figure*}[t]
  \centering
  \begin{minipage}{0.25\textwidth}
    \centering
    \includegraphics[width=\linewidth]{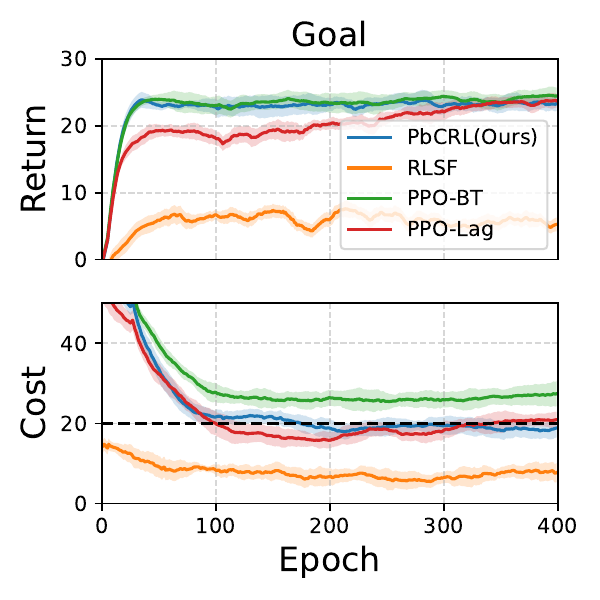}
  \end{minipage}
  \hspace{0.9em}
  \begin{minipage}{0.25\textwidth}
    \centering
    \includegraphics[width=\linewidth]{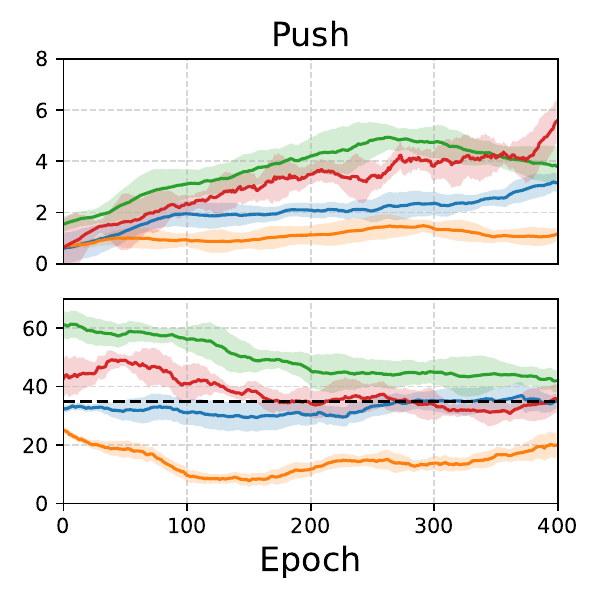}
  \end{minipage}
  \hspace{0.9em}
  \begin{minipage}{0.25\textwidth}
    \centering
    \includegraphics[width=\linewidth]{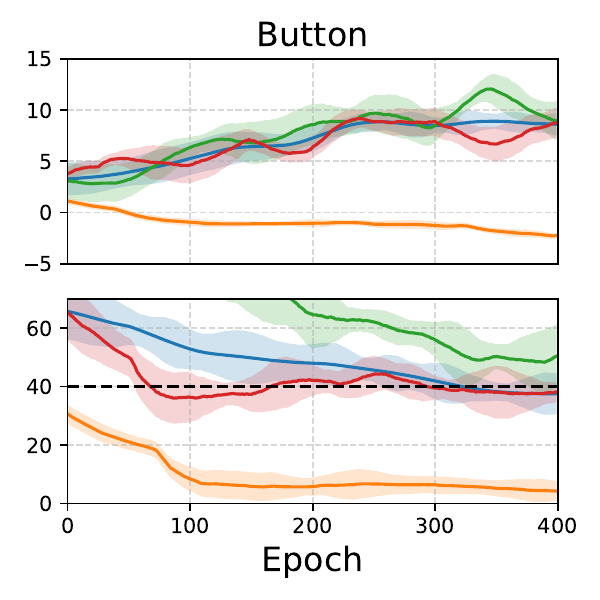}
  \end{minipage}
  \caption{Average episode return and cost of four algorithms on robotic navigation tasks. Dashed lines are safety thresholds.}
  \label{fig:threeimages}
\end{figure*}

\paragraph{Overall Performance}
To answer \textbf{Q1} regarding the effectiveness and superiority of our approach, we first present a holistic evaluation across six Safety Gymnasium tasks in Table~\ref{tab:results}. PbCRL demonstrates superior results by effectively balancing return and safety, outperforming other preference-based baselines. Crucially, the true cost of PbCRL converges closely to the safety threshold, mirroring the behavior of the Oracle (PPO-Lag). This empirically validates that the constraint learned by PbCRL functions as a reliable surrogate for the underlying safety constraint.

To provide deeper insight into how these results are achieved, Figure~\ref{fig:threeimages} illustrates the training dynamics on three representative navigation tasks. We observe that the true cost (Row 2) of all algorithms converges towards a steady state, indicating all policies satisfy their safety constraints, either learned (PbCRL, RLSF, PPO-BT) or ground truth (PPO-Lag). 

PPO-Lag (red), the oracle with ground-truth costs, establishes a performance upper bound with high returns and satisfied constraints. In contrast, RLSF (orange) yields the lowest return and cost across all methods. This behavior is likely due to an overestimation issue in its cost function learned from segments, resulting in overly conservative constraints that sacrifice return performance for excessive safety \cite{reddy2024safety}.

Comparing PbCRL (blue) with PPO-BT (green), PbCRL achieves high returns comparable to the Oracle PPO-Lag (red) while maintaining costs just below the safety threshold (black dashed line). This balance between return and safety, similar to PPO-Lag, demonstrates that the learned constraint successfully aligns with the genuine safety requirements. Conversely, PPO-BT, despite its high return, consistently violates constraints with costs exceeding the threshold. As analyzed in Section~\ref{deadzone}, this failure is attributed to the standard BT model's tendency to infer symmetric cost distributions rather than heavy-tailed ones, causing underestimation of the expected cost and overly aggressive policies.
\vspace{-3ex}

\paragraph{Cost Distribution Analysis}
Addressing \textbf{Q2} on why PbCRL works, we delve into the characteristics of the learned cost distributions to explain the performance disparity. As illustrated in Figure~\ref{fig:distribution}, an accurate representation of the underlying cost distribution, especially its heavy-tailed nature, is vital for learning a reliable constraint that aligns with the true constraint $\mathbb{E}[C]<d$. We quantify the fidelity of learned cost distributions using the 2-Wasserstein (W2) distance against the ground truth, a lower W2 distance implies higher similarity between two distributions.

\vspace{-1ex}
\begin{table}[H]
  \centering
  \caption{W2 and Bias between converged cost and threshold}
  \scriptsize
  \begin{tabular*}{\columnwidth}{@{\extracolsep{\fill}}c cc cc cc}
    \hline
    \multirow{2}{*}{Tasks} & \multicolumn{2}{c}{PbCRL (Ours)} & \multicolumn{2}{c}{RLSF} & \multicolumn{2}{c}{PPO-BT} \\
    \cline{2-7}
    & W2 & Bias & W2 & Bias & W2 & Bias \\
    \hline
    Goal   & $\bm{16.8}$ & $\bm{1.5}$ & $68.1$ & $11.8$ & $42.7$ & $6.9$ \\
    Push    & $\bm{20.3}$ & $\bm{0.1}$ & $82.4$ & $14.9$ & $36.5$ & $6.8$ \\
    Button  & $\bm{44.3}$ & $\bm{2.5}$ & $165.5$ & $35.7$ & $78.8$ & $10.78$ \\
    \hline
  \end{tabular*}
  \label{tab:w2}
\end{table}
\vspace{-2ex}

Table~\ref{tab:w2} summarizes W2 distances and the bias between the converged true cost and the safety threshold. The results reveal a strong correlation: lower W2 distances consistently correspond to lower safety bias, indicating better constraint alignment. PbCRL achieves the lowest W2 across all tasks, confirming that the dead zone mechanism successfully encourages heavy-tailed characteristics. This distributional alignment directly explains the superior constraint satisfaction observed in Table~\ref{tab:results}. Further analysis is provided in Appendix~\ref{appendix:cost_distribution}.

\vspace{-1ex}
\subsection{Generalization to Complex Scenarios}
\label{complex_scenarios}

To address \textbf{Q3} regarding whether PbCRL can generalize to more complex and diverse domains, we extend our evaluation to two distinct settings: autonomous driving (highly dynamic continuous control) and language model alignment (high-dimensional discrete action space).

\paragraph{Autonomous Driving}
We utilize an autonomous driving simulator~\cite{erdem2020asking} featuring two highway tasks: blocked road and lane change. These tasks require the agent to navigate dynamic traffic while avoiding collisions in highly interactive and uncertain environments.
As shown in Table~\ref{tab:driving}, PbCRL mirrors the Oracle's (PPO-Lag) performance, achieving high returns while strictly maintaining costs within the safety threshold. Conversely, PPO-BT consistently exceeds the safety threshold, while RLSF falls into conservatism with lower returns.
\begin{table}[h]
  \centering
  \caption{Results on driving scenarios, R: Return, C: Cost}
  \scriptsize
  \setlength{\tabcolsep}{1.2pt}
  \begin{tabular}{@{}lcccccc@{}}
    \toprule
    Task(Threshold) & M & PPO-Lag(Oracle) & PbCRL(Ours) & RLSF & PPO-BT\\
    \midrule
    \multirow{2}{*}{Block (0.1)} & R & $39.1\pm0.1$ & $\bm{36.4\pm0.3}$ & $31.8\pm0.3$ & $38.5\pm0.6$ \\
     & C & $0.05\pm0.01$ & $\bm{0.07\pm0.01}$ & $0.05\pm0.01$ & $0.12\pm0.02$ \\
    \midrule
    \multirow{2}{*}{Lane (0.1)} & R & $51.6\pm0.9$ & $\bm{51.4\pm0.5}$ & $44.9\pm0.8$ & $51.5\pm0.8$ \\
     & C & $0.01\pm0.01$ & $\bm{0.08\pm0.01}$ & $0.06\pm0.01$ & $0.14\pm0.02$ \\
    \bottomrule
  \end{tabular}
  \label{tab:driving}
\end{table}
\vspace{-3ex}

\paragraph{Language Model Alignment}
While our primary focus remains on safety-critical continuous control, we conduct preliminary experiments in language model alignment to evaluate the generality of PbCRL in high-dimensional, discrete action spaces. Llama-3.2-1B is employed as base model and trained on Safe RLHF dataset~\cite{daisafe}. We utilize Gemini 2.5 Flash as an impartial judge to compute win rates, using consistent reward and cost models across all baselines. Results in Table~\ref{tab:llm} confirm PbCRL's potential to generalize to higher-dimensional domains.
\begin{table}[h]
  \centering
  \caption{Results on Language Model Alignment}
  \scriptsize
  \begin{tabular*}{\columnwidth}{@{\extracolsep{\fill}}lcccc}
    \toprule
    Metric & PPO & RLSF & Safe RLHF & PbCRL (Ours)\\
    \midrule
    Win Rate (Helpful) $\uparrow$ & $72.5\%$ & $60.4\%$ & $79.4\%$ & $\bm{80.7\%}$ \\
    Win Rate (Harmless) $\uparrow$ & $60.7\%$ & $54.2\%$ & $76.5\%$ & $\bm{82.1\%}$ \\
    Reward $\uparrow$ & $2.78$ & $1.31$ & $4.05$ & $\bm{4.22}$ \\
    Cost $\downarrow$ & $-0.57$ & $1.10$ & $-2.97$ & $\bm{-3.03}$ \\
    \bottomrule
  \end{tabular*}
  \label{tab:llm}
\end{table}
\vspace{-2ex}

\subsection{Ablation Study}
\label{Ablation}

\paragraph{Dead Zone Parameter}
We investigate the sensitivity of the dead zone parameter $\delta$ during the cost model pre-training phase, with results summarized in Table~\ref{tab:ablation_deadzone_main}.

\begin{table}[h]
  \centering
  \caption{Ablation study on dead zone parameter $\delta$}
  \scriptsize
  \setlength{\tabcolsep}{1.2pt}
  \begin{tabular*}{\columnwidth}{@{\extracolsep{\fill}}llcccc}
    \toprule
    Tasks & M & $\delta=0$ & $\delta=0.1$ & $\delta=1$ & $\delta=2$ \\
    \midrule
    \multirow{3}{*}{Goal (20)} 
         & W2 & $38.9$ & $33.5$ & $\bm{16.8}$ & $29.1$ \\ 
         & R  & $24.55\pm1.01$ & $24.79\pm1.18$ & $\bm{23.41\pm1.08}$ & $22.51\pm1.05$ \\
         & C  & $28.05\pm2.55$ & $27.46\pm1.89$ & $\bm{18.50\pm2.03}$ & $16.49\pm2.18$ \\
    \midrule
    \multirow{3}{*}{Push (35)}
         & W2 & $36.1$ & $35.8$ & $\bm{20.3}$ & $23.6$ \\ 
         & R  & $4.15\pm0.55$ & $3.81\pm0.21$ & $\bm{3.20\pm0.38}$ & $2.96\pm0.35$ \\
         & C  & $42.50\pm5.50$ & $40.26\pm2.15$ & $\bm{34.86\pm3.13}$ & $30.91\pm1.78$ \\ 
    \bottomrule
  \end{tabular*}
  \label{tab:ablation_deadzone_main}
\end{table}

Setting $\delta=0$ reverts to the standard BT model. As discussed in Section~\ref{deadzone}, this induces a symmetric distribution that underestimates tail risks and yields high W2 distances. A small offset ($\delta=0.1$) marginally improves results, but remains insufficient. Conversely, a large dead zone ($\delta=2$) over-corrects the distribution, leading to over-conservatism and distributional distortion (increased W2 distances). A moderate $\delta=1$ achieves the lowest W2 and best reward-safety trade-off by accurately recovering the heavy-tailed cost distribution.

\paragraph{Safety Threshold Offsets}
A potential heuristic to address the risk underestimation of standard BT is manually lowering its safety threshold $d$. To evaluate this, we train PPO-BT on the \textit{Goal} task (true threshold $d=20$) with various stricter training targets $d \in \{20, 18, 15, 10\}$. 

\begin{table}[h]
  \centering
  \caption{Ablation study on varying safety threshold offsets}
  \scriptsize
  \setlength{\tabcolsep}{1.2pt}
  \begin{tabular*}{\columnwidth}{@{\extracolsep{\fill}}lccccc@{}}
    \toprule
    \multirow{2}{*}{M} & PbCRL & \multicolumn{4}{c}{PPO-BT (varying $d$)} \\
    \cmidrule(lr){2-2} \cmidrule(lr){3-6}
    & $d=20$ & $d=20$ & $d=18$ & $d=15$ & $d=10$ \\
    \midrule
    \tiny R & \tiny $\bm{23.41\pm1.08}$ & \tiny $24.26\pm0.99$ & \tiny $23.86\pm1.21$ & \tiny $23.15\pm1.05$ & \tiny $18.27\pm1.13$ \\
    \tiny C & \tiny $\bm{18.50\pm2.03}$ & \tiny $26.97\pm2.62$ & \tiny $24.08\pm2.87$ & \tiny $20.82\pm2.05$ & \tiny $14.02\pm1.95$ \\
    \bottomrule
  \end{tabular*}
  \label{tab:safety_factor}
\end{table}

As summarized in Table~\ref{tab:safety_factor}, while manually lowering the threshold indeed reduces PPO-BT's actual cost, it yields a suboptimal reward-safety trade-off compared to PbCRL. Crucially, this heuristic may not be a viable solution in real-world deployment. Under preference-based settings, the true safety limit is implicitly embedded within the collected preference labels, rather than explicitly provided. Mathematically, simply shifting the threshold acts as a mere linear translation of a structurally biased, symmetric distribution. In contrast, PbCRL fundamentally resolves this mismatch by dynamically shaping the distribution to recover the heavy tail, achieving constraint satisfaction without  sensitive, task-specific threshold tuning.

\paragraph{SNR-based Loss}
The SNR-based loss (Section~\ref{SNR}) in Section~\ref{SNR} is designed to induce sufficient cost variance for efficient policy optimization. We conduct a sensitivity analysis on the SNR loss weight $\zeta$ in Goal task, summarizing mid-training and final performance in Table~\ref{tab:snr}.

\begin{table}[h]
  \centering
  \caption{Ablation study on SNR loss weight $\zeta$}
  \scriptsize
  \setlength{\tabcolsep}{5pt}
  \begin{tabular}{@{}lcccc@{}}
    \toprule
    Metrics & $\zeta = 10^{-2}$ & $\zeta = 10^{-3}$ (Ours) & $\zeta = 10^{-5}$ & $\zeta = 0$ \\
    \midrule
    Return (Mid) & $3.8 \pm 2.2$ & $\bm{22.9 \pm 1.5}$ & $22.5 \pm 1.3$ & $22.1 \pm 1.1$ \\
    Cost (Mid)   & $17.9 \pm 8.1$ & $\bm{18.4 \pm 2.3}$ & $24.5 \pm 2.5$ & $24.8 \pm 2.8$ \\
    Return (End) & $6.5 \pm 1.8$ & $\bm{23.4 \pm 1.1}$ & $20.6 \pm 1.6$ & $20.7 \pm 1.4$ \\
    Cost (End)   & $21.9 \pm 5.9$ & $\bm{18.5 \pm 2.0}$ & $19.2 \pm 2.2$ & $18.4 \pm 2.6$ \\
    \bottomrule
  \end{tabular}
  \label{tab:snr}
\end{table}
\vspace{-1ex}

Results show that an excessive SNR weight $(10^{-2})$ destabilizes the learning process, evidenced by high variance in both return and cost. Conversely, removing  (0) or under-weighting $(10^{-5})$ the SNR term leads to a flattened cost landscape, resulting in slow safety convergence (higher costs at mid-training). A balanced $\zeta$ $(10^{-3})$ induces sufficient cost differentiation, accelerating convergence with lower cost and higher return both at mid-training and final. This empirically validates that the SNR loss provides a more informative signal, facilitating more efficient policy learning.

\paragraph{Two-stage Training Strategy}
We compare PbCRL against PbCRL-Offline (cost model trained at offline stage only) in Goal and Push tasks, summarized in Table~\ref{tab:two-stage}. While both methods leverage offline dataset and reduce online labeling burden, the full PbCRL achieves converged costs closer to the safety threshold. This confirms that the online finetuning stage enhances the alignment of the learned constraint, thereby improving constraint satisfaction.
\vspace{-1ex}
\begin{table}[H]
  \centering
  \caption{Ablation study on two-stage training strategy}
  \scriptsize
  \begin{tabular*}{\columnwidth}{@{\extracolsep{\fill}}lccc}
    \hline
    Tasks & Metrics & PbCRL-Offline & PbCRL\\ 
    \hline
    \multirow{2}{*}{Goal (20)}   
         & Return   & $24.79\pm1.75$ & $\bm{23.41\pm1.08}$ \\
         & Cost     & $22.70\pm0.84$ & $\bm{18.50\pm2.03}$ \\
    \hline
    \multirow{2}{*}{Push (35)}   
         & Return   & $3.77\pm0.67$  & $\bm{3.20\pm0.38}$ \\
         & Cost     & $37.48\pm5.00$ & $\bm{34.86\pm3.13}$ \\
    \hline
  \end{tabular*}
  \label{tab:two-stage}
\end{table}
\vspace{-2ex}

Further experimental results and analyses are provided in Appendix~\ref{appendix:additional_experiments}.

\section{Conclusion}
\label{conclusion}
In this paper, we addressed the significant challenge of learning complex, implicit safety constraints in real-world Safe Reinforcement Learning applications. We proposed Preference-based Constrained Reinforcement Learning (PbCRL), a novel framework that learns safety constraints from human preference over trajectories. Our technical contributions include extending the Bradley-Terry model with dead zone to better align the learned cost distribution with the true underlying heavy-tailed costs and mitigate constraint underestimation. We introduced a novel SNR-based loss to promote cost variation, providing a more informative signal for efficient policy optimization. Additionally, our two-stage training strategy effectively reduces the reliance on costly online labeling by leveraging offline data, while adaptive online finetuning enhances constraint satisfaction in the deployment environment. Experimental results demonstrated that our method achieved superior alignment with genuine safety requirements and outperformed state-of-the-art baselines. Despite these advances, our work has certain limitations. For instance, the current framework assumes that the human feedback is free from noise or inconsistencies. Future work could explore enhancing the robustness of PbCRL against noisy human judgments or adversarial attacks.

\section*{Impact Statement}
This paper presents work whose goal is to advance the field of machine learning. There are many potential societal consequences of our work, none of which we feel must be specifically highlighted here.


\bibliography{example_paper}

@article{reddy2024safety,
  title={Safety through feedback in Constrained RL},
  author={Reddy Chirra, Shashank and Varakantham, Pradeep and Paruchuri, Praveen},
  journal={Advances in Neural Information Processing Systems},
  volume={37},
  pages={139938--139967},
  year={2024}
}

@inproceedings{daisafe,
  title={Safe RLHF: Safe Reinforcement Learning from Human Feedback},
  author={Dai, Josef and Pan, Xuehai and Sun, Ruiyang and Ji, Jiaming and Xu, Xinbo and Liu, Mickel and Wang, Yizhou and Yang, Yaodong},
  booktitle={The Twelfth International Conference on Learning Representations},
  year={2024}
}

@article{ray2019benchmarking,
    author = {Ray, Alex and Achiam, Joshua and Amodei, Dario},
    title = {Benchmarking Safe Exploration in Deep Reinforcement Learning},
    year = {2019}
}

@article{ji2023safety,
  title={Safety gymnasium: A unified safe reinforcement learning benchmark},
  author={Ji, Jiaming and Zhang, Borong and Zhou, Jiayi and Pan, Xuehai and Huang, Weidong and Sun, Ruiyang and Geng, Yiran and Zhong, Yifan and Dai, Josef and Yang, Yaodong},
  journal={Advances in Neural Information Processing Systems},
  volume={36},
  pages={18964--18993},
  year={2023}
}

@inproceedings{sui2015safe,
  title={Safe exploration for optimization with Gaussian processes},
  author={Sui, Yanan and Gotovos, Alkis and Burdick, Joel and Krause, Andreas},
  booktitle={International conference on machine learning},
  pages={997--1005},
  year={2015},
  organization={PMLR}
}

@inproceedings{amani2021safe,
  title={Safe reinforcement learning with linear function approximation},
  author={Amani, Sanae and Thrampoulidis, Christos and Yang, Lin},
  booktitle={International Conference on Machine Learning},
  pages={243--253},
  year={2021},
  organization={PMLR}
}

@inproceedings{wachi2020safe,
  title={Safe reinforcement learning in constrained markov decision processes},
  author={Wachi, Akifumi and Sui, Yanan},
  booktitle={International Conference on Machine Learning},
  pages={9797--9806},
  year={2020},
  organization={PMLR}
}

@inproceedings{bennett2023provable,
  title={Provable safe reinforcement learning with binary feedback},
  author={Bennett, Andrew and Misra, Dipendra and Kallus, Nathan},
  booktitle={International Conference on Artificial Intelligence and Statistics},
  pages={10871--10900},
  year={2023},
  organization={PMLR}
}

@inproceedings{malik2021inverse,
  title={Inverse constrained reinforcement learning},
  author={Malik, Shehryar and Anwar, Usman and Aghasi, Alireza and Ahmed, Ali},
  booktitle={International conference on machine learning},
  pages={7390--7399},
  year={2021},
  organization={PMLR}
}

@inproceedings{xu2023uncertainty,
  title={Uncertainty-aware constraint inference in inverse constrained reinforcement learning},
  author={Xu, Sheng and Liu, Guiliang},
  booktitle={The Twelfth International Conference on Learning Representations},
  year={2023}
}

@inproceedings{xu2024robust,
  title={Robust inverse constrained reinforcement learning under model misspecification},
  author={Xu, Sheng and Liu, Guiliang},
  booktitle={Forty-first International Conference on Machine Learning},
  year={2024}
}

@inproceedings{scobee2019maximum,
  title={Maximum Likelihood Constraint Inference for Inverse Reinforcement Learning},
  author={Scobee, Dexter RR and Sastry, S Shankar},
  booktitle={International Conference on Learning Representations},
  year={2020}
}

@article{garcia2015comprehensive,
  title={A comprehensive survey on safe reinforcement learning},
  author={Garc{\i}a, Javier and Fern{\'a}ndez, Fernando},
  journal={Journal of Machine Learning Research},
  volume={16},
  number={1},
  pages={1437--1480},
  year={2015}
}

@article{gu2024review,
  title={A review of safe reinforcement learning: Methods, theories and applications},
  author={Gu, Shangding and Yang, Long and Du, Yali and Chen, Guang and Walter, Florian and Wang, Jun and Knoll, Alois},
  journal={IEEE Transactions on Pattern Analysis and Machine Intelligence},
  year={2024},
  publisher={IEEE}
}

@inproceedings{cosner2022safety,
  title={Safety-aware preference-based learning for safety-critical control},
  author={Cosner, Ryan and Tucker, Maegan and Taylor, Andrew and Li, Kejun and Molnar, Tamas and Ubelacker, Wyatt and Alan, Anil and Orosz, G{\'a}bor and Yue, Yisong and Ames, Aaron},
  booktitle={Learning for Dynamics and Control Conference},
  pages={1020--1033},
  year={2022},
  organization={PMLR}
}

@book{altman2021constrained,
  title={Constrained Markov decision processes},
  author={Altman, Eitan},
  year={2021},
  publisher={Routledge}
}

@inproceedings{ng2000algorithms,
  title={Algorithms for inverse reinforcement learning.},
  author={Ng, Andrew Y and Russell, Stuart and others},
  booktitle={Icml},
  volume={1},
  number={2},
  pages={2},
  year={2000}
}

@article{arora2021survey,
  title={A survey of inverse reinforcement learning: Challenges, methods and progress},
  author={Arora, Saurabh and Doshi, Prashant},
  journal={Artificial Intelligence},
  volume={297},
  pages={103500},
  year={2021},
  publisher={Elsevier}
}

@article{christiano2017deep,
  title={Deep reinforcement learning from human preferences},
  author={Christiano, Paul F and Leike, Jan and Brown, Tom and Martic, Miljan and Legg, Shane and Amodei, Dario},
  journal={Advances in neural information processing systems},
  volume={30},
  year={2017}
}

@article{wu2022survey,
  title={A survey of human-in-the-loop for machine learning},
  author={Wu, Xingjiao and Xiao, Luwei and Sun, Yixuan and Zhang, Junhang and Ma, Tianlong and He, Liang},
  journal={Future Generation Computer Systems},
  volume={135},
  pages={364--381},
  year={2022},
  publisher={Elsevier}
}

@inproceedings{li2025tilted,
  title={Tilted Quantile Gradient Updates for Quantile-Constrained Reinforcement Learning},
  author={Li, Chenglin and Ruan, Guangchun and Geng, Hua},
  booktitle={Proceedings of the AAAI Conference on Artificial Intelligence},
  volume={39},
  number={17},
  pages={18236--18243},
  year={2025}
}

@inproceedings{yang2021wcsac,
  title={WCSAC: Worst-case soft actor critic for safety-constrained reinforcement learning},
  author={Yang, Qisong and Sim{\~a}o, Thiago D and Tindemans, Simon H and Spaan, Matthijs TJ},
  booktitle={Proceedings of the AAAI Conference on Artificial Intelligence},
  volume={35},
  number={12},
  pages={10639--10646},
  year={2021}
}

@inproceedings{razin2025makes,
  title={What Makes a Reward Model a Good Teacher? An Optimization Perspective},
  author={Razin, Noam and Wang, Zixuan and Strauss, Hubert and Wei, Stanley and Lee, Jason D and Arora, Sanjeev},
  booktitle = {Advances in Neural Information Processing Systems},
  year={2025}
}

@inproceedings{lambert2024rewardbench,
  title={Rewardbench: Evaluating reward models for language modeling},
  author={Lambert, Nathan and Pyatkin, Valentina and Morrison, Jacob and Miranda, Lester James Validad and Lin, Bill Yuchen and Chandu, Khyathi and Dziri, Nouha and Kumar, Sachin and Zick, Tom and Choi, Yejin and others},
  booktitle={Findings of the Association for Computational Linguistics: NAACL 2025},
  pages={1755--1797},
  year={2025}
}

@inproceedings{wen2024rethinking,
  title={Rethinking Reward Model Evaluation: Are We Barking up the Wrong Tree?},
  author={Wen, Xueru and Lou, Jie and Lu, Yaojie and Lin, Hongyu and Yu, Xing and Lu, Xinyu and He, Ben and Han, Xianpei and Zhang, Debing and Sun, Le},
  booktitle={The Thirteenth International Conference on Learning Representations},
  year={2025}
}

@inproceedings{lee2021b,
  title={B-Pref: Benchmarking Preference-Based Reinforcement Learning},
  author={Lee, Kimin and Smith, Laura and Dragan, Anca and Abbeel, Pieter and others},
  booktitle={35th Conference on Neural Information Processing Systems (NeurIPS)},
  year={2021},
  organization={Neural Information Processing Systems Foundation}
}

@article{bradley1952rank,
  title={Rank analysis of incomplete block designs: I. The method of paired comparisons},
  author={Bradley, Ralph Allan and Terry, Milton E},
  journal={Biometrika},
  volume={39},
  number={3/4},
  pages={324--345},
  year={1952},
  publisher={JSTOR}
}

@article{ouyang2022training,
  title={Training language models to follow instructions with human feedback},
  author={Ouyang, Long and Wu, Jeffrey and Jiang, Xu and Almeida, Diogo and Wainwright, Carroll and Mishkin, Pamela and Zhang, Chong and Agarwal, Sandhini and Slama, Katarina and Ray, Alex and others},
  journal={Advances in neural information processing systems},
  volume={35},
  pages={27730--27744},
  year={2022}
}

@article{schulman2017proximal,
  title={Proximal policy optimization algorithms},
  author={Schulman, John and Wolski, Filip and Dhariwal, Prafulla and Radford, Alec and Klimov, Oleg},
  journal={arXiv preprint arXiv:1707.06347},
  year={2017}
}

@article{johnson2006signal,
  title={Signal-to-noise ratio},
  author={Johnson, Don H},
  journal={Scholarpedia},
  volume={1},
  number={12},
  pages={2088},
  year={2006}
}

@book{borkar2008stochastic,
  title={Stochastic approximation: a dynamical systems viewpoint},
  author={Borkar, Vivek S},
  year={2008},
  volume={9},
  publisher={Springer}
}

@article{chow2018risk,
  title={Risk-constrained reinforcement learning with percentile risk criteria},
  author={Chow, Yinlam and Ghavamzadeh, Mohammad and Janson, Lucas and Pavone, Marco},
  journal={Journal of Machine Learning Research},
  volume={18},
  number={167},
  pages={1--51},
  year={2018}
}

@inproceedings{erdem2020asking,
  title={Asking Easy Questions: A User-Friendly Approach to Active Reward Learning},
  author={Erdem, B and Palan, Malayandi and Landolfi, Nicholas C and Losey, Dylan P and Sadigh, Dorsa and others},
  booktitle={Conference on Robot Learning},
  pages={1177--1190},
  year={2020},
  organization={PMLR}
}

@inproceedings{robey2020learning,
  title={Learning control barrier functions from expert demonstrations},
  author={Robey, Alexander and Hu, Haimin and Lindemann, Lars and Zhang, Hanwen and Dimarogonas, Dimos V and Tu, Stephen and Matni, Nikolai},
  booktitle={2020 59th IEEE Conference on Decision and Control (CDC)},
  pages={3717--3724},
  year={2020},
  organization={Ieee}
}

@inproceedings{achiam2017constrained,
  title={Constrained policy optimization},
  author={Achiam, Joshua and Held, David and Tamar, Aviv and Abbeel, Pieter},
  booktitle={International conference on machine learning},
  pages={22--31},
  year={2017},
  organization={PMLR}
}

@inproceedings{yang2020projection,
  title={Projection-Based Constrained Policy Optimization},
  author={Yang, Tsung-Yen and Rosca, Justinian and Narasimhan, Karthik and Ramadge, Peter J},
  booktitle={International Conference on Learning Representations},
  year={2020}
}

@article{ding2020natural,
  title={Natural policy gradient primal-dual method for constrained markov decision processes},
  author={Ding, Dongsheng and Zhang, Kaiqing and Basar, Tamer and Jovanovic, Mihailo},
  journal={Advances in Neural Information Processing Systems},
  volume={33},
  pages={8378--8390},
  year={2020}
}

@inproceedings{yu2022reachability,
  title={Reachability constrained reinforcement learning},
  author={Yu, Dongjie and Ma, Haitong and Li, Shengbo and Chen, Jianyu},
  booktitle={International conference on machine learning},
  pages={25636--25655},
  year={2022},
  organization={PMLR}
}

@article{gu2024enhancing,
  title={Enhancing efficiency of safe reinforcement learning via sample manipulation},
  author={Gu, Shangding and Shi, Laixi and Ding, Yuhao and Knoll, Alois and Spanos, Costas and Wierman, Adam and Jin, Ming},
  journal={Advances in Neural Information Processing Systems},
  volume={37},
  pages={17247--17285},
  year={2024}
}
\bibliographystyle{icml2026}

\newpage
\appendix
\onecolumn
\section{Related Work}
\label{appendix:related_work}

This section provides a comprehensive overview of related literature in Safe Reinforcement Learning, Constraint Inference, and Preference-based Learning, highlighting the position and contributions of our proposed PbCRL framework.

\paragraph{Safe Reinforcement Learning (Safe RL)}
Safe RL addresses the problem of learning optimal policies under safety constraints, typically formulated as Constrained Markov Decision Processes (CMDPs) \cite{altman2021constrained}. Existing methods can be broadly categorized into primal-dual methods \cite{ray2019benchmarking, ding2020natural} and projection-based methods \cite{achiam2017constrained, yang2020projection}. Primal-dual approaches, such as PPO-Lagrangian \cite{ray2019benchmarking}, employ Lagrangian multipliers to dynamically balance the reward maximization and cost minimization objectives. Projection-based methods, such as CPO \cite{achiam2017constrained} and PCPO \cite{yang2020projection}, project the policy gradient onto a feasible set to ensure constraint satisfaction at each update step. 

While these methods have achieved significant success, they predominantly rely on \textbf{explicitly defined cost functions}. However, in many real-world applications like autonomous driving and robotics, defining such explicit constraints is often subjective, implicit, or prohibitively expensive \cite{cosner2022safety,reddy2024safety, xu2024robust}. Our work complements these foundational Safe RL algorithms by focusing on the inference of latent safety constraints from human feedback, rather than assuming they are given.

\paragraph{Constraint Inference/Learning}
To bypass manual constraint specification with dedicated system engineering, a growing body of work focuses on learning safety constraints from data. Inverse Reinforcement Learning (IRL) techniques \cite{ng2000algorithms} have been adapted to infer constraints that rationalize expert demonstrations, often referred to as Inverse Constrained Reinforcement Learning (ICRL) \cite{scobee2019maximum, malik2021inverse, xu2023uncertainty, xu2024robust}. Early works utilized Maximum Entropy IRL frameworks to infer constraints that explain expert behaviors \cite{scobee2019maximum}. Recent advancements include extending ICRL to high-dimensional control tasks \cite{malik2021inverse}, incorporating uncertainty estimation \cite{xu2023uncertainty}, and leveraging robustness principles \cite{xu2024robust}. Other approaches integrate control theoretic guarantees, such as Control Barrier Functions (CBF) \cite{robey2020learning} or Hamilton-Jacobi reachability analysis \cite{yu2022reachability}, to learn safe sets from demonstrations. Despite their promise, these methods fundamentally depend on \textbf{fine-grained expert demonstrations}, which can be costly or impossible to collect in safety-critical scenarios where experts may not be available or perfect \cite{arora2021survey}.

Alternatively, constraints can be learned from feedback signals. However, prior works often impose restrictive assumptions, such as limiting constraints to specific function classes (e.g., linear or Gaussian Process-based) \cite{sui2015safe, amani2021safe} or requiring dense state-level feedback \cite{bennett2023provable}. These requirements limit scalability in complex environments. In contrast, our work leverages trajectory-level preferences, avoiding restrictive assumptions on the constraint form while being more sample-effective than expert demonstrations or dense labeling.

\paragraph{Preference-based Reinforcement Learning (PbRL)}
PbRL replaces the scalar reward signal with human preferences over trajectory segments, enabling policy learning when reward engineering is difficult \cite{christiano2017deep, lee2021b}. The standard paradigm involves learning a reward model, typically parameterized by a Bradley-Terry (BT) model \cite{bradley1952rank}, from pairwise comparisons to guide policy optimization \cite{ouyang2022training}. While PbRL has been extensively studied for reward maximization, its application to constraint learning is relatively nascent. 

Recent efforts have begun to apply BT-based models to infer safety constraints. For instance, \citet{reddy2024safety} utilize BT models to learn binary cost functions from human preferences with various granularity, trajectory level in simple tasks or state level in complex tasks. Within the domain of language model alignment, works such as \citet{daisafe} also leverage the BT model to constrain potentially harmful behaviors in model-generated text responses. However, these methods inherit the limitations of standard ranking models: they focus on relative ordering rather than absolute value estimation. As identified in our work, standard BT models tend to yield symmetric utility distributions, failing to capture the \textbf{heavy-tailed nature} of safety costs caused by cascading failures in continuous control tasks \cite{yang2021wcsac, li2025tilted}. This mismatch leads to risk underestimation. Our PbCRL framework addresses this critical gap by introducing a dead-zone mechanism to enforce heavy-tailed cost distributions, ensuring accurate safety alignment.

\section{Proofs}
\label{appendix:proofs}
\subsection{Proof of Lemma \ref{prop:gradient}}
\label{appendix:gradient}

\begin{proof}
Considering an unsafe trajectory $\tau$ with safety label $\epsilon=0$ and estimated cost $\hat C(\tau)$, the gradient of the originally loss $\mathcal{L}_{safe}$ in Eq.~\ref{eq:safeloss0} with respect to $\hat C(\tau)$ is given by:
\begin{equation}
  \nabla_{\hat C(\tau)}\mathcal{L}_{safe} = \sigma(\hat C(\tau)) - 1 = -\sigma(-\hat C(\tau))
\end{equation}
The gradient of the Dead Zone loss $\mathcal{L}_{safe}^{DZ}$ with respect to $\hat C(\tau)$ is given by:
\begin{equation}
  \nabla_{\hat C(\tau)}\mathcal{L}_{safe}^{DZ} = \sigma(\hat C(\tau)-\delta) - 1 = -\sigma(-(\hat C(\tau)-\delta))
\end{equation}
Since the sigmoid function is monotonically increasing and $\delta > 0$, we have:
\begin{equation}
  -\sigma(-(\hat C(\tau)-\delta_+)) < -\sigma(-\hat C(\tau)) <0
\end{equation}
Therefore, $\nabla_{\hat C(\tau)}\mathcal{L}_{safe}^{DZ} < \nabla_{\hat C(\tau)}\mathcal{L}_{safe} < 0$ holds.
\end{proof}

\subsection{Proof of Theorem~\ref{theorem:mean_shift}}
\label{appendix:mean_shift}
\begin{proof}
  We analyze the update dynamics using mathematical induction. First define the gradient update operators for the two losses as:
  \begin{align}
    F(\hat C) &= \hat C - \eta \nabla_{\hat C} \mathcal{L}_{safe}= \hat C + \eta (1 - \sigma(\hat C)) \\
    F^{DZ}(\hat C) &= \hat C - \eta \nabla_{\hat C} \mathcal{L}_{safe}^{DZ} = \hat C + \eta (1 - \sigma(\hat C - \delta))
  \end{align}
  The cost updates at step $t$ can be expressed as:
  \begin{align}
    \hat C_t(\tau) &= F(\hat C_{t-1}(\tau)) \\
    \hat C_t^{DZ}(\tau) &= F^{DZ}(\hat C_{t-1}^{DZ}(\tau))
  \end{align}
  We proceed with two properties to establish the induction.
  \paragraph{Operator dominance.}
  From Lemma \ref{prop:gradient}, for any $\hat C$ and $\eta>0$ we have
  \begin{equation}
    F(\hat C) < F^{DZ}(\hat C).
  \end{equation}
  
  \paragraph{Monotonicity.}
  Examining the derivative of $F^{DZ}(\hat C)$ with respect to $\hat C$:
  \begin{equation}
    \frac{dF^{DZ}(\hat C)}{d\hat C}
    = 1 - \eta\,\sigma(\hat C-\delta)\big(1-\sigma(\hat C-\delta)\big).
  \end{equation}
  Since $\sigma(x)(1 - \sigma(x)) \leq 0.25$ for all $x$, the condition $\eta < 4$ (which is satisfied by any standard learning rate configuration in practice) implies $\frac{dF^{DZ}(\hat C)}{d\hat C} > 0$, i.e. $F^{DZ}(\hat C)$ is strictly increasing.

  \paragraph{Induction step.}
  For the base case $t=1$, from the identical initialization $\hat C_0^{DZ}(\tau) = \hat C_0(\tau)$ and operator dominance, we have:
  \begin{equation}
    \hat C_1^{DZ}(\tau) = F^{DZ}(\hat C_0^{DZ}(\tau)) > F(\hat C_0(\tau)) = \hat C_1(\tau).
  \end{equation}
  Now assume $\hat C_{k}^{DZ}(\tau) > \hat C_{k}(\tau)$ holds for some $t=k$. From the monotonicity of $F^{DZ}$, we have:
  \begin{equation}
    F^{DZ}(\hat C_{k}^{DZ}(\tau)) > F^{DZ}(\hat C_{k}(\tau)).
  \end{equation}
  From the operator dominance, we also have:
  \begin{equation}
    F^{DZ}(\hat C_{k}(\tau)) > F(\hat C_{k}(\tau)).
  \end{equation}
  Combining the above two inequalities, we get:
  \begin{equation}
    \hat C_{k+1}^{DZ}(\tau) = F^{DZ}(\hat C_{k}^{DZ}(\tau)) > F(\hat C_{k}(\tau)) = \hat C_{k+1}(\tau).
  \end{equation}
  Thus, by mathematical induction, the inequality $\hat C_t^{DZ}(\tau) > \hat C_t(\tau)$ holds for all $t>0$.
\end{proof}

\subsection{Proof of Corollary~\ref{corollary:heavy_tail}}
\label{appendix:corollary}
\begin{proof}
From Theorem~\ref{theorem:mean_shift}, we have established that for every unsafe trajectory $\tau$, the learned cost values satisfy the strict inequality $\hat C^{DZ}(\tau) > \hat C(\tau)$.
In probability theory, this pointwise strict dominance implies Strict First-Order Stochastic Dominance. By definition, if random variable $X$ dominates $Y$ in the first order, then for any non-decreasing function $u$, $\mathbb{E}[u(X)] > \mathbb{E}[u(Y)]$.
Choosing the indicator function $u(c) = \mathbb{I}(c \ge z)$ (which represents the tail probability), we directly obtain:
$$\mathbb{P}(\hat C^{DZ} \ge z) > \mathbb{P}(\hat C \ge z)$$
This confirms that the dead zone loss assigns strictly higher probability mass to the tail region of the cost distribution.
\end{proof}

\subsection{Proof of Theorem~\ref{theo:convergence}}
\label{appendix:convergence}
\begin{proof}
We consider a three-time scale stochastic approximation framework:
\begin{align}
    \psi_{t+1}&=\psi_t - lr_\psi(t) \nabla_\psi \mathcal{L}_{PbCI}(\psi_t) \notag \\
    \theta_{t+1}&=\theta_t - lr_\theta(t) \nabla_\theta \mathcal{L}(\psi_t,\theta_t,\lambda_t) \notag \\
    \lambda_{t+1}&=\lambda_t + lr_\lambda(t) \nabla_\lambda \mathcal{L}(\psi_t,\theta_t,\lambda_t)
    \label{eq:three_timescale}
\end{align}
where the cost model parameter $\psi$ is updated at the fastest timescale, followed by the policy $\theta$, and the Lagrange multiplier $\lambda$ at the slowest timescale. We analyze the convergence of each component sequentially.

\paragraph{Convergence of cost model $\psi$.} From Assumption~\ref{assump:convergence}, the cost model update in Eq.~\ref{eq:finetune} occurs at the fastest timescale. Thus, $\theta$ and $\lambda$ can be treated as quasi-static from the perspective of $\psi$. The update of $\psi$ tracks the ODE:
\begin{align}
    \dot \psi = -\nabla_\psi \mathcal{L}_{PbCI}(\psi)
\end{align}
In the online finetuning phase, the cost model parameters are initialized close to the previous solution. Under the timescale separation in Assumption~\ref{assump:convergence}, for fixed $\theta$ and $\lambda$, $\psi$ converges to the local attractor $\psi^*(\theta,\lambda)$ almost surely.

\paragraph{Convergence of policy $\theta$.} The policy update occurs at the intermediate timescale. According to the timescale separation in Assumption~\ref{assump:convergence}, $\lambda$ can be considered as an arbitrary constant. To establish the convergence of the policy parameters, we invoke a local version of the two-time-scale convergence theorem (Adapted from \cite{borkar2008stochastic}):
\begin{theorem}
\label{theo:policy}
 For a two timescale coupled iterations\cite{borkar2008stochastic}:
  \begin{equation}
  \begin{aligned}
    \psi_{t+1}&=\psi_t+lr_\psi(t)(h(\psi_t,\theta_t)+m_t) \\
    \theta_{t+1}&=\theta_t+lr_\theta(t)(g(\psi_t,\theta_t)+n_t)
  \end{aligned}
  \label{eq:odeq_th} 
\end{equation}
for $t\ge 0$ satisfying:
\begin{itemize}
  \item $h(\psi,\theta)$ and $g(\psi,\theta)$ are Lipschitz
  \item Noise sequences $m_t$, $n_t$ are bounded martingale differences
  \item $lr_\psi(t),lr_\theta(t)$ satisfy the step sizes timescale separation in Assumption~\ref{assump:convergence}
\end{itemize}

  If for any fix $\theta$, ODE $\dot \psi=h(\psi,\theta)$ converges to a local attractor $\psi^*(\theta)$, then the iteration $\theta_k$ converges almost surely to the set of stationary points of the ODE:
  \begin{align}
      \dot \theta = g(\psi^*(\theta),\theta)
  \end{align}
  
\end{theorem}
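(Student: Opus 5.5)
The plan is to follow the classical two-timescale argument of \cite{borkar2008stochastic}: first show that the fast iterate $\psi_t$ asymptotically tracks $\psi^*(\theta_t)$, then show that the slow iterate $\theta_t$ is a perturbed stochastic approximation of $\dot\theta = g(\psi^*(\theta),\theta)$. Throughout I take two facts as part of the standing ("local") hypotheses, as Assumption~\ref{assump:convergence} implicitly does:
\begin{itemize}
\item the iterates $(\psi_t,\theta_t)$ remain almost surely in a compact set contained in the domain of attraction;
\item the map $\theta\mapsto\psi^*(\theta)$ is continuous, and in fact Lipschitz on that set.
\end{itemize}
These are the places where a genuinely local theorem must rely on assumptions rather than derivations.

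\textbf{Step 1 (noise).} Define the partial sums $M_t=\sum_{k<t} lr_\psi(k)\, m_k$ and $N_t=\sum_{k<t} lr_\theta(k)\, n_k$. Both are martingales. Since the noise is bounded and the step sizes are square-summable, their quadratic variations are finite. The martingale convergence theorem then gives almost-sure convergence of $M_t$ and $N_t$, so the noise contributes only vanishing tail increments on any finite ODE time window.

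\textbf{Step 2 (fast timescale).} Rewrite the slow recursion on the fast clock as
\begin{equation*}
\theta_{t+1}=\theta_t+lr_\psi(t)\Big[\tfrac{lr_\theta(t)}{lr_\psi(t)}\big(g(\psi_t,\theta_t)+n_t\big)\Big].
\end{equation*}
The bracketed term is $o(1)$ by timescale separation together with boundedness of the iterates. Hence the pair $(\psi_t,\theta_t)$ is an asymptotic pseudotrajectory of the coupled ODE
\begin{equation*}
\dot\psi=h(\psi,\theta),\qquad \dot\theta=0.
\end{equation*}
The Lipschitz property of $h$ and Gronwall's inequality control the discretization error on intervals of fixed ODE length. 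Applying the Benaïm/Hirsch limit-set lemma, the limit points lie in the internally chain transitive sets of this ODE. Because $\psi^*(\theta)$ is an attractor for each frozen $\theta$, and the iterates stay in its domain of attraction, these sets are contained in $\{(\psi^*(\theta),\theta)\}$. This yields $\|\psi_t-\psi^*(\theta_t)\|\to 0$ almost surely.

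\textbf{Step 3 (slow timescale).} Write
\begin{equation*}
\theta_{t+1}=\theta_t+lr_\theta(t)\big(g(\psi^*(\theta_t),\theta_t)+\varepsilon_t+n_t\big),\qquad \varepsilon_t=g(\psi_t,\theta_t)-g(\psi^*(\theta_t),\theta_t).
\end{equation*}
By the Lipschitz property of $g$ and Step 2, $\varepsilon_t\to 0$. This is a standard stochastic approximation with vanishing bias and martingale noise, and the driving field $\theta\mapsto g(\psi^*(\theta),\theta)$ is Lipschitz because $g$ and $\psi^*$ are. Repeating the argument of Step 2, $\theta_t$ converges to an internally chain transitive invariant set of $\dot\theta=g(\psi^*(\theta),\theta)$.

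To sharpen "invariant set" to "set of stationary points", I will use the gradient structure present in our application, namely $g=-\nabla_\theta\mathcal{L}$. The function $\mathcal{L}(\psi^*(\theta),\theta,\lambda)$ then serves as a Lyapunov function, strictly decreasing off the stationary set. Provided this set has empty-interior image under the Lyapunov function (Sard-type condition), the internally chain transitive sets consist of stationary points, which is the claim.

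\textbf{Main obstacle.} I expect the hardest part to be the localization in Step 2. One must guarantee that the iterates never leave the domain of attraction, and that the attractor $\psi^*(\theta)$ varies continuously, i.e.\ is uniformly asymptotically stable in $\theta$. My first approach is a stopping-time argument: on the event that the iterates stay in a compact subset of the domain of attraction, which Assumption~\ref{assump:convergence} asserts has probability one, run the above argument. Uniform stability over the compact set of $\theta$ values would then follow from the implicit function theorem applied to $h(\psi,\theta)=0$, assuming the Jacobian $\nabla_\psi h$ is nonsingular at the attractors.
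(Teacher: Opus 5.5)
The paper gives no proof of this statement; it is quoted (``adapted'') from Borkar, and your Steps 1--3 reproduce Borkar's two-timescale argument. The hypotheses you add are exactly the ones Borkar needs and the statement omits, so flagging them is right:
\begin{itemize}
\item almost-sure boundedness of the iterates;
\item confinement of the iterates to the domain of attraction;
\item a Lipschitz, uniformly stable $\psi^*(\cdot)$.
\end{itemize}
For the uniformity you want $\nabla_\psi h(\psi^*(\theta),\theta)$ to be Hurwitz, not merely nonsingular: nonsingularity gives a $C^1$ branch but not stability. What the argument actually delivers is your Step 3, namely convergence to an internally chain transitive invariant set of $\dot\theta=g(\psi^*(\theta),\theta)$. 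That is also Borkar's general conclusion. Reaching equilibria there requires an extra hypothesis on the slow ODE, such as a globally asymptotically stable equilibrium or a strict Lyapunov function.

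The genuine gap is your upgrade from invariant sets to stationary points. The slow field $g(\psi^*(\theta),\theta)=-\nabla_\theta\mathcal{L}(\psi,\theta,\lambda)\big|_{\psi=\psi^*(\theta)}$ is a \emph{partial} gradient. It is not the gradient of $F(\theta)=\mathcal{L}(\psi^*(\theta),\theta,\lambda)$. Along the slow ODE,
\[
\frac{d}{dt}F(\theta)=-\big\|\nabla_\theta\mathcal{L}\big\|^2-\big\langle\nabla_\psi\mathcal{L},\,D\psi^*(\theta)\,\nabla_\theta\mathcal{L}\big\rangle ,
\]
with all gradients evaluated at $(\psi^*(\theta),\theta,\lambda)$. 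The cross term has no sign, because $\psi^*(\theta)$ is a zero of $h=-\nabla_\psi\mathcal{L}_{PbCI}$, not of $\nabla_\psi\mathcal{L}$. So $F$ need not be a Lyapunov function.

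Concretely, let $J$ be rotation by $\pi/2$ in $\mathbb{R}^2$, and set $h(\psi,\theta)=J\theta-\psi=-\nabla_\psi\tfrac12\|\psi-J\theta\|^2$ and $g(\psi,\theta)=-\psi=-\nabla_\theta(\psi^\top\theta)$.
\begin{itemize}
\item Every hypothesis of the statement holds, and $\psi^*(\theta)=J\theta$ is globally exponentially stable and Lipschitz.
\item Yet $F(\theta)=\theta^\top J^\top\theta\equiv 0$.
\item The slow ODE $\dot\theta=-J\theta$ has every circle as an internally chain transitive set, while only the origin is stationary.
\item The coupled system $\varepsilon\dot\psi=J\theta-\psi$, $\dot\theta=-\psi$ even has slow eigenvalues $\approx\varepsilon\pm i$, so the lagging fast variable makes $\theta$ spiral slightly outward.
\end{itemize}

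Hence the statement as written does not follow from its hypotheses, and your proof needs one more assumption. Any of the following would do:
\begin{itemize}
\item $D\psi^*\equiv 0$, e.g.\ $\mathcal{L}_{PbCI}$ genuinely independent of $\theta$, as the notation $\mathcal{L}_{PbCI}(\psi)$ suggests. Then the slow ODE is an honest gradient flow and your Sard argument applies.
\item $\nabla_\psi\mathcal{L}(\psi^*(\theta),\theta,\lambda)=0$.
\item A strict Lyapunov function for the slow ODE, assumed outright.
\end{itemize}
The paper's surrounding text makes the same conflation: it writes $\nabla_\theta\mathcal{L}(\psi^*(\theta),\theta,\lambda)=\nabla_\theta\mathcal{L}(\theta,\lambda)$ and calls the slow ODE a gradient flow.
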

Viewed from the timescale of $\theta$, the cost model has equilibrated to $\psi^*(\theta)$. The policy update direction $g$ corresponds to the gradient of the Lagrangian $\mathcal{L}(\psi,\theta, \lambda)$. Thus, the asymptotic behavior of $\theta$ is governed by the ODE:
\begin{align}
    \dot{\theta} =- \nabla_\theta \mathcal{L}(\psi^*(\theta),\theta,\lambda)= -\nabla_\theta \mathcal{L}(\theta, \lambda)
\end{align}

This ODE describes a gradient descent flow on the Lagrangian loss. According to Theorem~\ref{theo:policy}, the sequence $\theta_k$ converges to the set of stationary points $\mathcal{Z} = \{ \theta \mid \nabla_\theta \mathcal{L}(\theta, \lambda) = 0 \}$. Since stochastic gradient descent algorithms avoid strict saddle points almost surely, we conclude that $\theta_k$ converges to a local minimizer $\theta^*(\lambda)$ of $\mathcal{L}(\theta, \lambda)$ almost surely.

\paragraph{Convergence of multiplier $\lambda$.}

Regarding the update of the Lagrange multiplier $\lambda$, as it follows the standard convergence analysis for Primal-Dual Safe RL algorithms~\cite{borkar2008stochastic,chow2018risk,altman2021constrained,li2025tilted}, we provide a concise summary here.

Analogous to the analysis for $\theta$, the update of $\lambda$ can be viewed as a stochastic approximation process tracking the dual ascent trajectory. Established results in constrained MDP theory guarantee that, given the convergence of the faster-scale iterates ($\psi_k \to \psi^*$ and $\theta_k \to \theta^*$), the multiplier $\lambda_k$ converges almost surely to a stationary point. 

Consequently, the full coupled system $(\psi_k, \theta_k, \lambda_k)$ converges to $(\psi^*, \theta^*, \lambda^*)$, a local saddle point of the Lagrangian $\mathcal{L}(\psi,\theta,\lambda)$. Integrating the saddle point properties of the Lagrangian, we conclude that $\pi_{\theta^*}$ is a local optimal policy for the safe RL problem in Eq.~\ref{eq:rlcl}.
\end{proof}

\section{Further Analysis of SNR-based Loss}
\label{appendix:snr}
Recent research in Reinforcement Learning from Human Feedback (RLHF)~\cite{razin2025makes} highlights that relying solely on prediction accuracy for reward model evaluation is insufficient. Specifically, a reward model with low variance tends to yield a flattened reward landscape, which hinders effective policy gradient updates. Motivated by this observation, the SNR-based loss proposed in Section~\ref{SNR} is designed to induce sufficient cost variance for efficient policy optimization. We demonstrate that maximizing the variance of the learned cost function (the numerator of the SNR) effectively raises the upper bound of the policy gradient norm in the downstream Safe RL phase, thereby facilitating efficient exploration.

Consider the gradient of the cost objective $\mathcal{J}^{C_\psi}(\pi_\theta)$ in Eq.~\eqref{eq:lagrange} with respect to the policy parameters $\theta$:
\begin{equation}
  \nabla_\theta \mathcal{J}^{C_\psi}(\pi_\theta) = \mathbb{E}_{\tau \sim \pi_\theta} \left[ C_\psi(\tau) \nabla_\theta \log \pi_\theta(\tau) \right]
  \label{eq:apdx_policy_gradient}
\end{equation}
Recall that the score function has a zero mean:
\begin{equation}
  \mathbb{E}_{\tau \sim \pi_\theta} \left[\nabla_\theta \log \pi_\theta(\tau) \right]=0
\end{equation}
Let $\mu_C=\mathbb{E}\left[ C_\psi(\tau) \right]$. We can rewrite Eq.~\eqref{eq:apdx_policy_gradient} using the definition of covariance:
\begin{align}
  \nabla_\theta \mathcal{J}^{C_\psi}(\pi_\theta) =& \mathbb{E}_{\tau \sim \pi_\theta} \left[ C_\psi(\tau) \nabla_\theta \log \pi_\theta(\tau) \right]- \mu_C \mathbb{E}_{\tau \sim \pi_\theta} \left[\nabla_\theta \log \pi_\theta(\tau) \right]\notag \\
   =& \text{Cov}(C_\psi(\tau), \nabla_\theta \log \pi_\theta(\tau))
\end{align}
Applying the Cauchy-Schwarz inequality, we derive the following upper bound on the gradient norm:
\begin{equation}
  \| \nabla_\theta \mathcal{J}^{C_\psi}(\pi_\theta) \|^2 = \| \text{Cov}(C_\psi(\tau), \nabla_\theta \log \pi_\theta(\tau)) \|^2 \leq \text{Var}(C_\psi(\tau)) \cdot \text{Var}(\nabla_\theta \log \pi_\theta(\tau))\propto \text{Var}(C_\psi(\tau))
  \label{eq:policy_gradient_bound}
\end{equation}

The inequality in Eq.~\ref{eq:policy_gradient_bound} indicates that maximizing the variance of the learned cost function through the SNR-based loss directly leads to an increased upper bound on the policy gradient norm. This ensures that the policy receives a more informative and diverse cost gradient signal during optimization, facilitating efficient exploration of the policy space and improving convergence in policy learning.

\section{Implementation Details}
\label{implementation}
\subsection{Simulation Environments}
\label{appendix:env}

\paragraph{Safety Gymnasium Environments}
Safety Gymnasium \cite{ji2023safety} is a suite of diverse simulated environments for safe reinforcement learning. We focus on two types of tasks within Safety Gymnasium: locomotion tasks and navigation tasks. 

Locomotion tasks include three agents: HalfCheetah, Walker2d and Humanoid, shown in Figure~\ref{fig:locomotion_envs}. These tasks require the agent to move forward as fast as possible while satisfying velocity constraints.
\begin{figure}[t]
  \centering
  \begin{minipage}{0.25\textwidth}
    \centering
    \includegraphics[width=\textwidth]{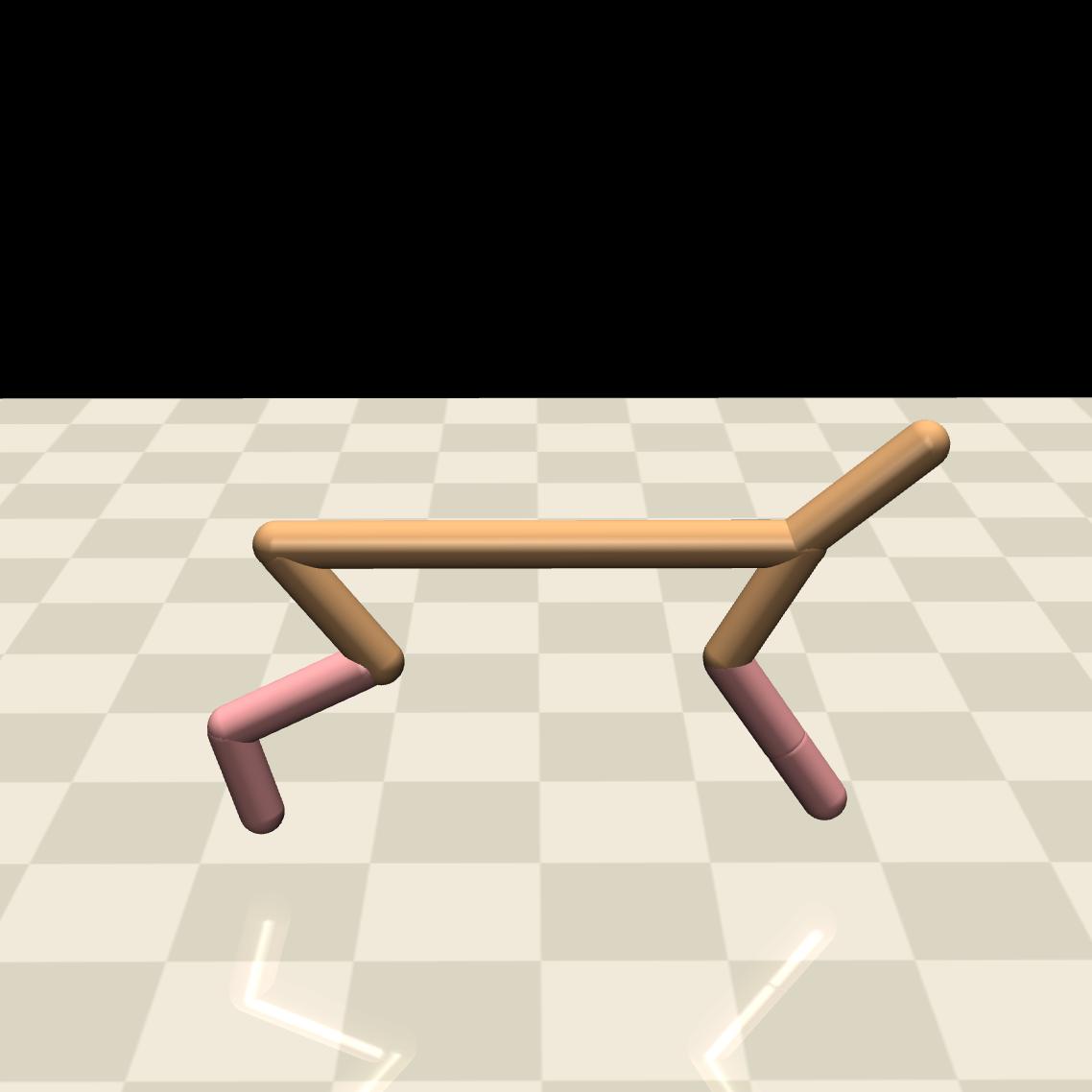}
    \vspace{1ex}
    (a) HalfCheetah
  \end{minipage}\hspace{1.5em}
  \begin{minipage}{0.25\textwidth}
    \centering
    \includegraphics[width=\textwidth]{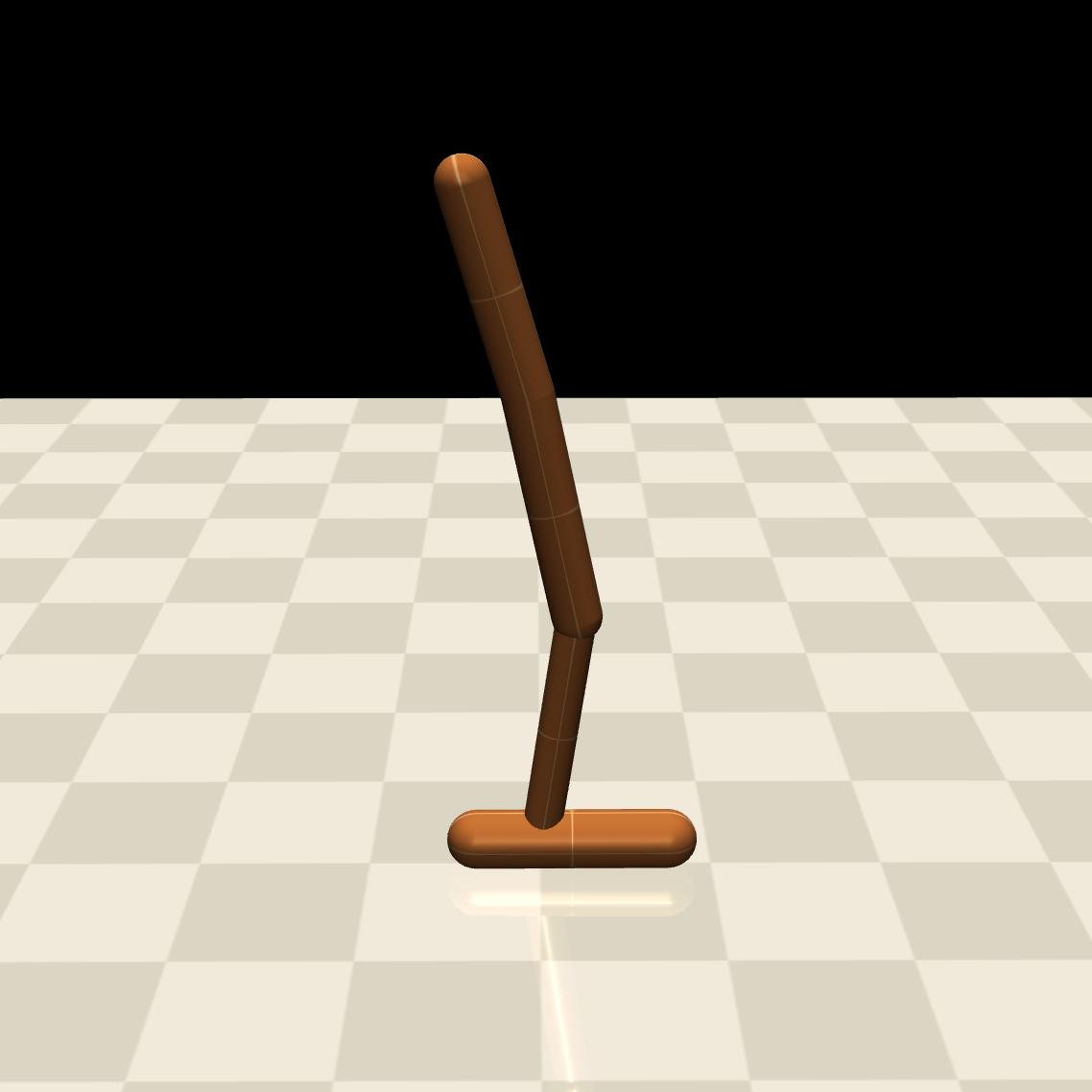}
    \vspace{1ex}
    (b) Walker2d
  \end{minipage}\hspace{1.5em}
  \begin{minipage}{0.25\textwidth}
    \centering
    \includegraphics[width=\textwidth]{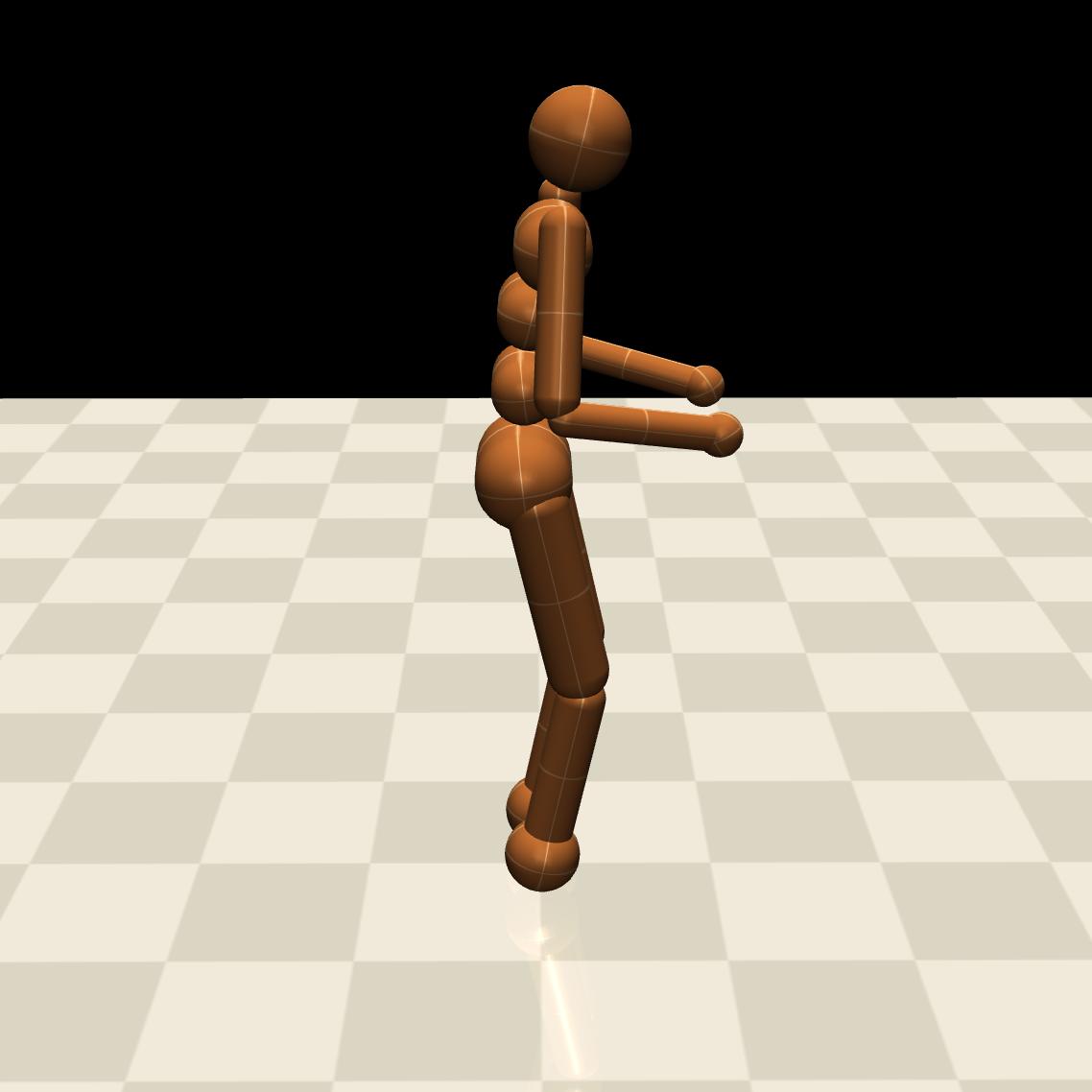}
    \vspace{1ex}
    (c) Humanoid
  \end{minipage}
  \caption{Locomotion Environments in Safety Gymnasium}
  \label{fig:locomotion_envs}
\end{figure}

The observation space includes position and velocity information of the agent's joints. The dimensionality of the observation space is 18 in HalfCheetah, 12 in Walker2d and 350 in Humanoid. The action space is continuous, representing the torques applied to the agent's joints. The reward function is defined by the forward velocity of the agent, with an additional penalty for taking too large actions. Safety violations are quantified by a cost of +1 for each time step the agent's velocity exceeds a predefined threshold, otherwise 0.

Navigation tasks include three environments: Goal, Push, and Button. RL policies are trained to navigate a Point agent to a specified target while avoiding collisions with static and moving obstacles. Goal task in Figure~\ref{fig:envs} (a) requires the agent (red) to navigate a target location (green cylinder) while avoiding collisions with hazards (blue circle) and vases (cyan-blue cube). Push task in Figure~\ref{fig:envs} (b)  involves pushing an object (yellow box) to a target location (green cylinder) while avoiding collisions with hazards (blue circle) and pillars (blue cylinder). Button task in Figure~\ref{fig:envs} (c)  requires pressing a correct button (green cylinder) while avoiding collisions with hazards (blue circle), moving gremlins (purple cube) and other buttons (orange sphere).

\begin{figure}[t]
  \centering
  \begin{minipage}{0.25\textwidth}
    \centering
    \includegraphics[width=\textwidth]{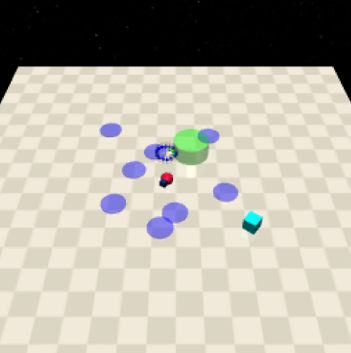}
    \vspace{1ex}
    (a) Goal
  \end{minipage}\hspace{1.5em}
  \begin{minipage}{0.25\textwidth}
    \centering
    \includegraphics[width=\textwidth]{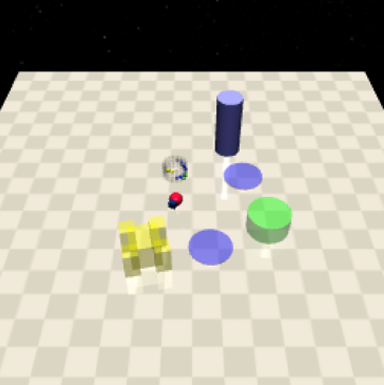}
    \vspace{1ex}
    (b) Push
  \end{minipage}\hspace{1.5em}
  \begin{minipage}{0.25\textwidth}
    \centering
    \includegraphics[width=\textwidth]{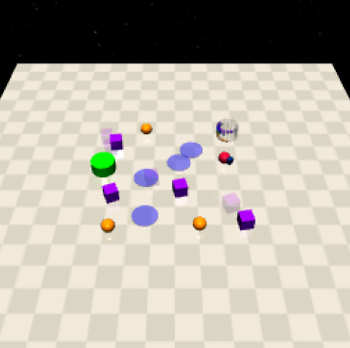}
    \vspace{1ex}
    (c) Button
  \end{minipage}
  \caption{Navigation Environments in Safety Gymnasium}
  \label{fig:envs}
\end{figure}

The challenges of the three tasks increase in the order of Goal, Push, and Button. The Goal task is the simplest, where the agent only needs to reach a target location while avoiding static obstacles. The Push task adds an additional object that needs to be pushed to the target location, making it more complex. The Button task is the most challenging environment, as it involves both static and dynamic obstacles.

The observation space includes agent sensory information (accelerometer, velocimeter, gyro and magnetometer) and lidar data of objects in the environment. The dimensionality of the observation space is 60 in Goal, 76 in Push and Button. The action space is 2-dimensional, representing the agent's rotation and forward/backward movement. The reward function is defined by the distance reduction to the target at each step, with an additional positive reward upon reaching the target. Safety violations are quantified by a cost of +1 for each collision with obstacles, otherwise 0. When the agent manages to accomplish the task, the environment will regenerate a new task location randomly. This setup encourages the agent to complete the task as quickly as possible while avoiding collisions in an episode of 1000 steps. Environment will be reset after each episode. 

Cost thresholds are assigned task-specifically that scale with difficulty, which is a standard protocol in Safe RL \cite{gu2024enhancing} to ensure informative evaluation. Uniform thresholds are often uninformative: loose limits render simple tasks vacuous, while tight limits make complex tasks infeasible. Experiments with a uniform threshold is conducted in Appendix \ref{sec:uniform_threshold} to confirm this.

We conducted all experiments on Ubuntu 20.04 with 24-core CPU, NVIDIA RTX 4090, and 128GB RAM. PbCRL requires approximately 5 hours for each individual run (1.5 hours for offline cost pre-training and 3.5 hours for online fine-tuning and policy training), whereas other fully online baselines require approximately 6 hours.

\paragraph{Autonomous Driving Environment}
We utilize an autonomous driving simulator in \cite{erdem2020asking,reddy2024safety}. The environment simulates two highway driving scenarios: blocked road and lane change, as shown in Figure~\ref{fig:driving_envs}. The observation space consists of the ego vehicle's state (position, velocity, acceleration) and the surrounding vehicles' states (relative position, velocity, acceleration). The action space is the steering angle and acceleration of the ego vehicle. The reward function is defined based on the distance reduction to the destination at each step. The agent receives a positive cost for off-road driving, exceeding speed limits or getting too close to other vehicles.

\begin{figure}[t]
  \centering
  \begin{minipage}{0.15\textwidth}
    \centering
    \includegraphics[width=\textwidth]{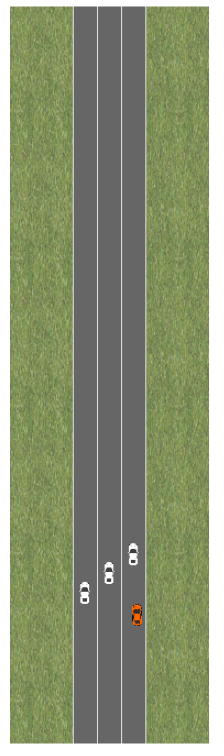}
    \vspace{1ex}
    (a) Blocked Road
  \end{minipage}\hspace{4em}
  \begin{minipage}{0.15\textwidth}
    \centering
    \includegraphics[width=\textwidth]{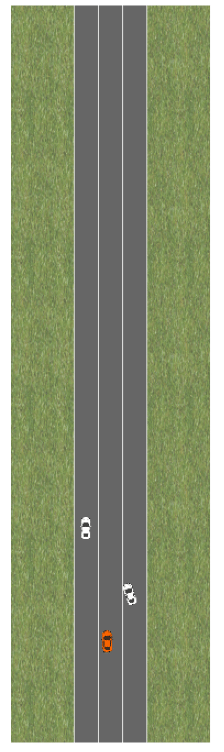}
    \vspace{1ex}
    (b) Lane Change
  \end{minipage}
  \caption{Autonomous Driving Environments}
  \label{fig:driving_envs}
\end{figure}

\paragraph{Language Model Alignment}
We include preliminary experiments on language model alignment to evaluate the generality of PbCRL in high-dimensional, discrete action spaces. We employ Llama-3.2-1B \footnote{https://huggingface.co/meta-llama/Llama-3.2-1B} as the base model and train it on the Safe RLHF dataset \cite{daisafe}, which contains human preferences over model-generated text responses. The dataset separates human preferences into two categories: helpfulness and harmlessness. Helpfulness labels are used to train the reward model, while harmlessness labels are used to train the cost model. We utilize Gemini 2.5 Flash API \footnote{https://ai.google.dev/gemini-api} as an impartial judge to compute win rates, using consistent reward and cost models to evaluate all baselines. Experiments are based on the code \footnote{https://github.com/PKU-Alignment/safe-rlhf} provided by \citet{daisafe} and conducted on a server with NVIDIA A800x4 GPUs.

\subsection{Algorithms Implementation}

We utilize the code \footnote{https://github.com/shshnkreddy/RLSF} provided by the authors of RLSF \cite{reddy2024safety} for implementation. Other baselines including PPO-BT and PPO-Lag \cite{ray2019benchmarking} are implemented based on the above code.

We implement the proposed PbCRL in Safety Gymnasium and autonomous driving environments based on the code \footnote{https://github.com/hmhuy0/SIM-RL}. The cost model $c_\psi$ is implemented as a 3-layer MLP with 64 hidden units per layer and ReLU activations. The policy $\pi_\theta$ is based on the PPO algorithm \cite{schulman2017proximal}. It uses an actor network to output a Gaussian distribution over actions. Two separate critic networks are used to estimate the reward value function ($V^R_\phi$) and the cost value function ($V^C_\varphi$). The actor network and both critic networks share the same architecture: a 4-layer MLP with 256 hidden units per layer and ReLU activations.

\section{Training Details}
\label{training}

This section provides detailed procedures for training the proposed PbCRL algorithm, encompassing both the cost model and the policy. The cost model is trained using a two-stage training strategy: an offline pre-training phase followed by an online finetuning phase. The policy is trained by a policy gradient method guided by the learned cost model.

\subsection{Cost Model Training}

Following standard PbRL protocols \cite{lee2021b, christiano2017deep}, we employ a synthetic oracle based on ground-truth costs to simulate human preference feedback for Safety Gymnasium and driving tasks. This established practice ensures evaluation consistency and reproducibility at scale, especially in simulated environments where real-time human querying is prohibitively expensive. For language modeling tasks, we utilize a standard dataset with human-annotated preferences provided by \cite{daisafe}.

In Safety Gymnasium and driving task, all methods are limited to the same total query budget of 20000 for constraint learning. We assign 18000 queries for offline pre-training and 2000 queries for online finetuning. The offline dataset is constructed by randomly sampling trajectories from interactions generated by a PPO-Lag agent, labels are then generated using the ground truth cost and threshold.

\paragraph{Offline Pre-training} We train the cost model parameterized by $\psi$ using the proposed loss function $\mathcal{L}_{PbCI}$, including the pairwise preference loss $\mathcal{L}_{pair}$, safety loss $\mathcal{L}_{safe}^{DZ}$ and SNR loss $\mathcal{L}_{SNR}$:
\begin{align}
    &\mathcal{L}_{pair}(\psi)=-\mathbb{E}_{\mathcal{D}} \Big[ \mu_1\log\sigma(C_\psi(\tau_2)-C_\psi(\tau_1)) + \mu_2\log\sigma(C_\psi(\tau_1)-C_\psi(\tau_2))\Big] \notag\\
    &\mathcal{L}_{safe}(\psi)=-\mathbb{E}_{\mathcal{D}} \Big[ \epsilon\log\sigma(-C_\psi(\tau)) + (1-\epsilon)\log\sigma(C_\psi(\tau)-\delta)\Big] \notag\\
    &\mathcal{L}_{SNR}(\psi)=-\zeta\frac{\text{Var}(C_\psi(\tau))}{\mathcal{H}(p(\mu))}, \quad \text{where } \mathcal{H}(p(\mu)) = -\sum p(\mu) \ln p(\mu) \notag\\
    &\mathcal{L}_{PbCI}(\psi)=\mathcal{L}_{pair}(\psi)+\mathcal{L}_{safe}^{DZ}(\psi)+\mathcal{L}_{SNR}(\psi)
    \label{eq:total_loss_ap}
\end{align}
During this phase, the dead zone parameter $\delta$ is fixed (e.g., $\delta=1$) to ensure training stability. The cost model parameters are updated via gradient descent:
\begin{align}
    \psi\leftarrow \psi-lr_\psi \nabla \mathcal{L}_{PbCI}(\psi)
    \label{eq:finetune_ap}
\end{align}
We adopt the Adam optimizer with a learning rate of $1e^{-4}$ for the cost model. The batch size is set to 512. This procedure leverages offline preference data without incurring costly online labeling, enabling the cost model to learn a general estimate of the cost function. Early stop is applied to avoid overfitting.

\paragraph{Online Finetuning} The pre-trained cost model is then used for policy optimization. During policy training, we select a comparatively small amount of generated trajectories for online labeling, which are then used to finetune the cost model parameter $\psi$ using the same loss function.

Crucially, the dead zone parameter $\delta$, which was frozen during pre-training, is now adaptively updated to calibrate the safety boundary. Since the ground truth expectation $\mathbb{E}[C(\tau)]$ is inaccessible, we utilize the \textbf{violation rate} as a surrogate metric for alignment. We posit that the empirical violation probability $\mathbb{P}_{vio}$ (from labels) should align with the model's predicted violation probability $\hat{\mathbb{P}}_{vio}$. 
For a batch of online data $\mathcal{B}$, these probabilities are estimated as:
\begin{align}
  \hat{\mathbb{P}}_{vio} = \frac{1}{|\mathcal{B}|} \sum \mathbb{I}(c_\psi(\tau) > 0)\quad \quad
  \mathbb{P}_{vio} = \frac{1}{|\mathcal{B}|} \sum \mathbb{I}(\epsilon=0)
\end{align}
The dead zone parameter $\delta$ is updated to minimize the discrepancy between these two rates:
\begin{align}
  \mathcal{L}_{\delta} &= \| \mathbb{P}_{vio} - \hat{\mathbb{P}}_{vio} \|^2 \label{eq:deadzone_ap} \\
  \delta &\leftarrow \delta - lr_\delta \nabla_\delta \mathcal{L}_{\delta} \label{eq:deadzone2_ap}
\end{align}
This adaptive mechanism ensures that the learned cost model maintains a consistent interpretation of safety standards as the policy evolves. By performing labeling at a much lower frequency than policy updates, this design significantly reduces the annotation burden compared to fully online methods like RLSF \cite{reddy2024safety}.

\subsection{Policy Training}
Policy optimization is performed using \textbf{PPO-Lag} \cite{ray2019benchmarking}, a standard approach for Safe RL. We define the policy network $\pi_\theta$, reward value network $V^R_\phi$, and cost value network $V^C_\varphi$.

The policy $\pi_\theta$ is trained to maximize reward while satisfying the learned constraint $\mathcal{J}^{C_\psi}(\pi) \le 0$. The objective incorporates a Lagrange multiplier $\lambda$ into the advantage estimation. The policy loss is given by:
\begin{equation}
  \mathcal{L}_\theta = -\mathbb{E}_{\pi_\theta}\left[ \min\left( r_t(\theta) A_t^{Lag}, \text{clip}(r_t(\theta), 1-\epsilon_{clip}, 1+\epsilon_{clip}) A_t^{Lag} \right) \right]
  \label{eq:loss_ap}
\end{equation}
where $r_t(\theta) = \frac{\pi_\theta(a_t|s_t)}{\pi_{\text{old}}(a_t|s_t)}$ is the probability ratio. The Lagrangian advantage $A_t^{Lag}$ combines reward and cost advantages:
\begin{equation}
  A_t^{Lag} = A_t^R - \lambda A_t^C
  \label{eq:advantage}
\end{equation}
Here, $A_t^R$ and $A_t^C$ are calculated using Generalized Advantage Estimation (GAE) based on the reward $r(s,a)$ and the learned cost $c_\psi(s,a)$, respectively. The policy parameters are updated as:
\begin{align}
    \theta \leftarrow \theta - lr_\theta \nabla_\theta \mathcal{L}_\theta
    \label{eq:policy}
\end{align}

The value networks are trained to minimize the mean squared error (MSE) against the respective returns:
\begin{align}
    \mathcal{L}_\phi &= \frac{1}{2}\mathbb{E}\left[ (V^R_\phi(s) - R_{target})^2 \right], \quad \phi \leftarrow \phi - lr_\phi \nabla_\phi \mathcal{L}_\phi \\
    \mathcal{L}_\varphi &= \frac{1}{2}\mathbb{E}\left[ (V^C_\varphi(s) - C_{target})^2 \right], \quad \varphi \leftarrow \varphi - lr_\varphi \nabla_\varphi \mathcal{L}_\varphi
    \label{eq:value}
\end{align}
where $C_{target}$ is computed using the learned cost $c_\psi$.

Finally, the Lagrange multiplier $\lambda$ is updated via dual ascent to enforce the safety constraint:
\begin{align}
    \lambda \leftarrow \max\left\{ 0, \lambda + lr_\lambda \mathbb{E}_{\pi_\theta}[C_\psi] \right\}
    \label{eq:lagrange_update}
\end{align}

\subsection{Detailed Pseudo-code}

Algorithm~\ref{alg:PbCRL_ap} presents the complete workflow of PbCRL, explicitly detailing the parameter updates for the cost model ($\psi$), dead zone ($\delta$), policy ($\theta$), value functions ($\phi, \varphi$), and Lagrange multiplier ($\lambda$).

\begin{algorithm}[H]
\caption{Preference-based Constrained Reinforcement Learning (PbCRL)}
\label{alg:PbCRL_ap}
\begin{algorithmic}[1]
\REQUIRE Preference dataset $\mathcal{D}$, fine-tune interval $K$
\STATE \textbf{Phase 1: Cost Model Pre-training}
\STATE Initialize cost model parameters $\psi$ and dead zone $\delta$
\FOR{each epoch}
    \STATE Sample a batch of preference pairs from $\mathcal{D}$
    \STATE Update $\psi$ by minimizing $\mathcal{L}_{PbCI}$ (Eq.~\ref{eq:finetune_ap})
\ENDFOR
\STATE \textbf{ }
\STATE \textbf{Phase 2: Policy Optimization}
\STATE Initialize policy $\theta$, value networks $\phi, \varphi$, and multiplier $\lambda$
\FOR{iteration $n=0, 1, \ldots, N$}
    \STATE Collect trajectories using current policy $\pi_\theta$
    \IF{$n \bmod K = 0$}
      \STATE \textbf{// Online Finetuning}
      \STATE Sample a small batch of trajectories $\mathcal{B}$ for online labeling; update $\mathcal{D} \leftarrow \mathcal{D} \cup \mathcal{B}$
      \STATE Update dead zone $\delta$ using violation rate alignment (Eq.~\ref{eq:deadzone2_ap})
      \FOR{each fine-tuning epoch}
        \STATE Sample a batch from $\mathcal{D}$
        \STATE Update $\psi$ by minimizing $\mathcal{L}_{PbCI}$ (Eq.~\ref{eq:finetune_ap})
      \ENDFOR
    \ENDIF
    \STATE \textbf{// Policy Update}
    \STATE Compute advantages $A^R$ and $A^C$ using learned cost $c_\psi$
    \STATE Update policy $\theta$ via PPO-Lag objective (Eq.~\ref{eq:policy})
    \STATE Update value networks $\phi, \varphi$ (Eq.~\ref{eq:value})
    \STATE Update Lagrange multiplier $\lambda$ (Eq.~\ref{eq:lagrange_update})
\ENDFOR
\end{algorithmic}
\end{algorithm}

\section{Experiments}
\label{appendix:additional_experiments}

\subsection{Overall Results}

We provide additional empirical results and extended analysis in this section. Figure~\ref{fig:threeimages_ap} illustrates the training curves for average episode return (Row 1) and true cumulative safety cost (Row 2) across three navigation tasks. Additionally, Row 3 presents the average learned cost of the cost model for three algorithms: PbCRL, RLSF, and PPO-BT. The learned cost in PbCRL (blue) and PPO-BT (green) has threshold $\hat d=0$, while RLSF (orange) assumes the ground truth threshold $d$ a priori and use it to constrain the learned cost. We observe that the true cost (Row 2) of all algorithms except RLSF converges towards a steady state at the end of training, indicating the policies satisfy their safety constraints, either learned (PbCRL, PPO-BT) or ground truth (PPO-Lag). 

First, we consider PPO-Lag (red), optimized using the ground truth cost. It consistently serves as an effective upper bound, demonstrating high return while satisfying true constraints across tasks. In contrast, RLSF (orange) yields the lowest return and cost among all methods. This behavior is explained as an overestimation issue in its cost learning procedure from trajectory-level segments \cite{reddy2024safety}. As shown in Figure~\ref{fig:threeimages_ap} (Row 3), the learned costs are significantly high despite the true cost being low, resulting in an overly conservative learned constraint. As optimizing against an excessively tight constraint often sacrifices the potential return for safety, RLSF achieves suboptimal performance despite maintaining low cost.

\begin{figure}
  \centering
  \begin{minipage}{0.325\textwidth}
    \centering
    \includegraphics[width=\linewidth]{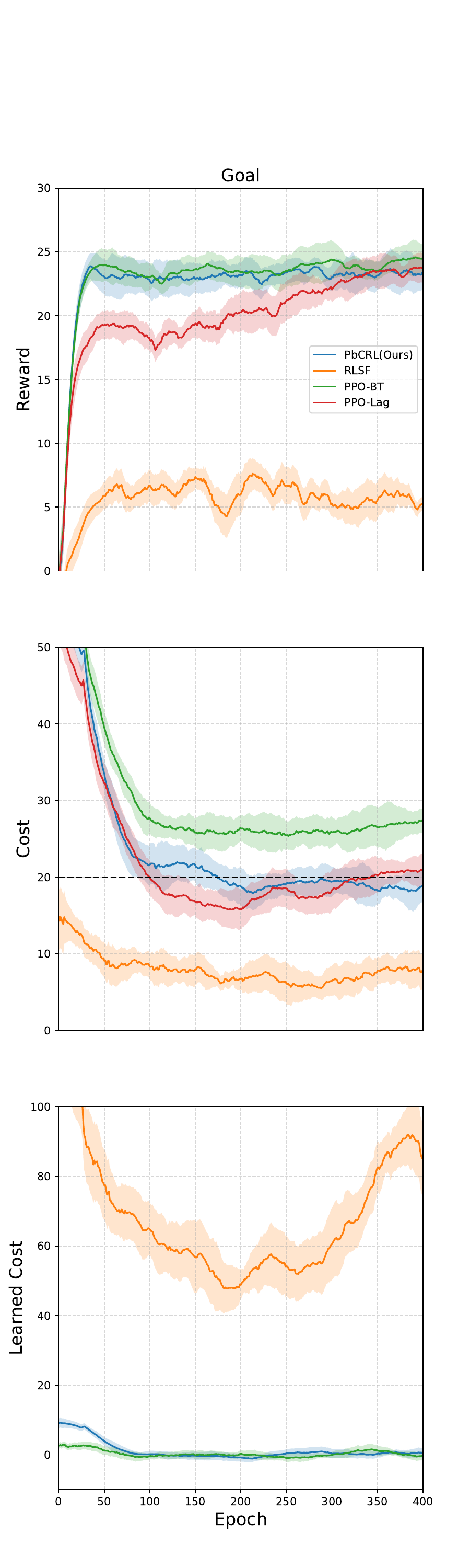}
  \end{minipage}
  \hfill
  \begin{minipage}{0.325\textwidth}
    \centering
    \includegraphics[width=\linewidth]{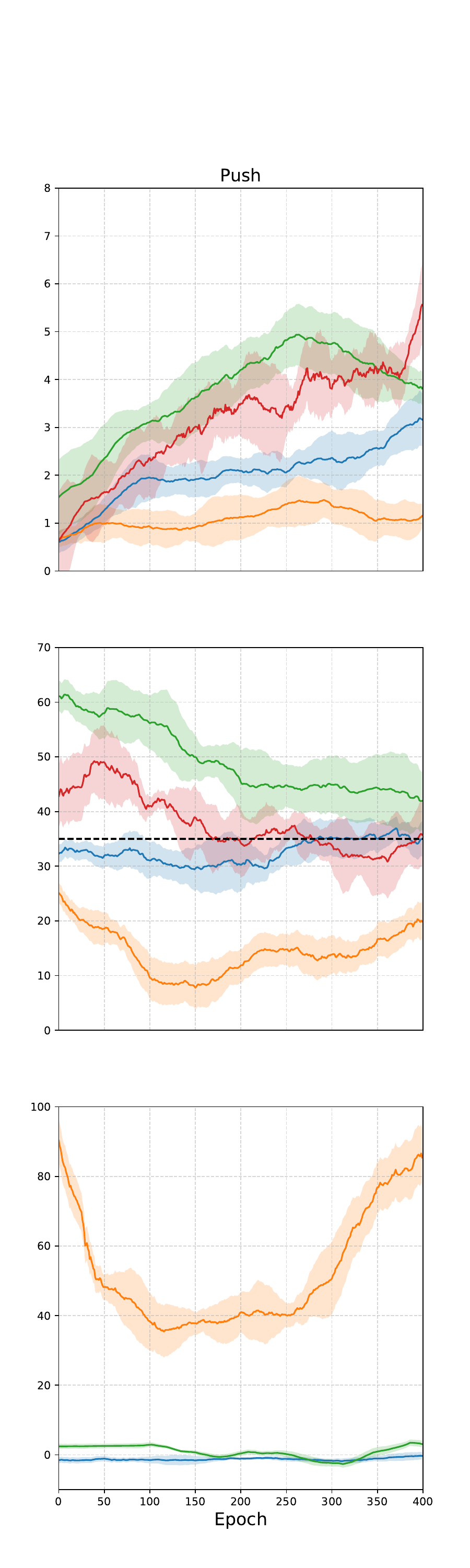}
  \end{minipage}
  \hfill
  \begin{minipage}{0.325\textwidth}
    \centering
    \includegraphics[width=\linewidth]{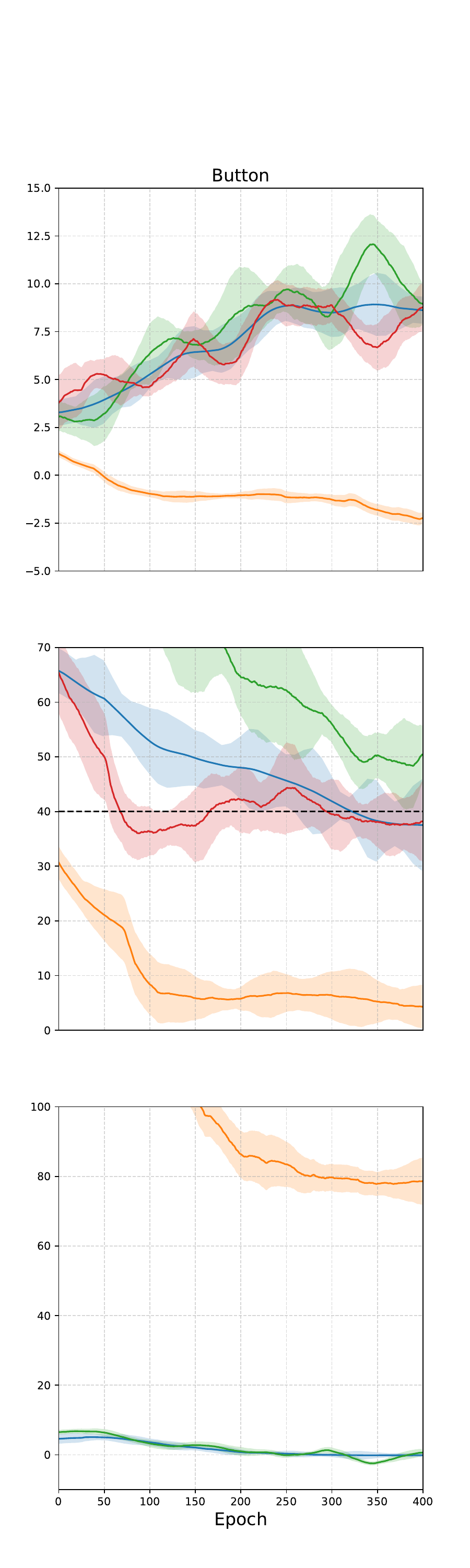}
  \end{minipage}
  \caption{Average episode return (Row 1), cost (Row 2) and learned cost (Row 3) of four algorithms on Goal (left), Push (middle) and Button (right) tasks. Dashed lines are safety thresholds.}
  \label{fig:threeimages_ap}
\end{figure}
Next, we compare our proposed PbCRL (blue) with PPO-BT (green). PbCRL consistently achieves high return performance comparable to the PPO-Lag upper bound, while successfully maintaining its true cost (Row 2, blue) slightly below the safety threshold (black dashed line). We observe that the learned cost of PbCRL (Row 3, blue) converges to $\hat d=0$ as well. This indicates that by satisfying the learned constraint $\mathbb{E}[C_\psi(\tau)]<\hat d=0$, the PbCRL policy is capable of meeting the true safety constraint $\mathbb{E}[C(\tau)]<d$. This finding, along with the balanced performance of PbCRL between return and safety, similar to PPO-Lag, provides strong evidence that our learned constraint effectively aligns with the genuine safety requirements.

Conversely, PPO-BT, despite also achieving high return comparable to PPO-Lag, fails to meet true safety requirement. We observe that the learned cost (Row 3, green) in PPO-BT converges to $\hat d=0$, indicating the policy satisfies the learned constraint $\mathbb{E}[C_\psi(\tau)]<\hat d=0$. However, the true cost (Row 2, green) exceeds the safety threshold (black dashed line), indicating the violation of the true safety constraint. This discrepancy suggests that the learned constraint in PPO-BT fails to accurately reflect the true constraint. As discussed in Section~\ref{deadzone}, we attribute this failure to the standard BT model's tendency, to induce a symmetric estimated cost distribution rather than a heavy-tailed one, shown in Figure~\ref{fig:distribution}. This leads to underestimation of the expected cost and consequently overly aggressive policies.

Table~\ref{tab:results_ap} provides a quantitative summary of the final performance metrics. PbCRL demonstrates superior overall performance by effectively balancing the trade-off between return and safety, outperforming other baselines. Moreover, confirming the visual trend in Figure~\ref{fig:threeimages_ap}, the true cost achieved by PbCRL policies consistently converges to values closely around the safety threshold, mirroring the behavior of PPO-Lag trained with the ground truth cost. This empirical finding validates that the constraint learned by PbCRL functions as a reliable surrogate for the true underlying constraint, highlighting the effectiveness of our constraint learning method.

\begin{table}[t]
  \centering
  \renewcommand{\arraystretch}{1.5}
  \caption{Empirical results on three navigation tasks in Safety Gymnasium}
  \setlength{\tabcolsep}{3pt}
  \begin{tabular}{lccccc}
    \hline
    \multirow{1}{*}{Tasks (Threshold)} & \multirow{1}{*}{Metrics} & PPO-Lag (Oracle) & PbCRL (Ours) & RLSF & PPO-BT\\
    \hline
      
    \multirow{4}{*}{Goal (20)} 
         & Return   & $23.50\pm1.14$ & $\bm{23.41\pm1.08}$ & $5.78\pm1.07$  & $24.26\pm0.99$ \\
         & Cost     & $20.70\pm1.91$ & $\bm{18.50\pm2.03}$ & $8.20\pm2.21$   & $26.97\pm2.62$ \\
         & Learned Cost & N/A            & $0.50\pm 1.26$                 & $89.52\pm 8.07$            & $\bm{0.12\pm 1.36}$ \\
         & Accuracy & N/A            & $\bm{87\%\pm2\%}$   & $83\%\pm2\%$    & $85\%\pm1\%$  \\
    \hline
       
    \multirow{4}{*}{Push (35)}
         & Return   & $4.77\pm0.67$  & $\bm{3.20\pm0.38}$  & $1.07\pm0.35$   & $4.09\pm0.57$ \\
         & Cost     & $36.17\pm5.00$ & $\bm{34.86\pm3.13}$ & $20.05\pm3.56$  & $41.85\pm5.65$ \\
         & Learned Cost & N/A            & $\bm{-0.73\pm 0.78}$                 & $80.48\pm 9.00$            & $1.62\pm 0.46$ \\
         & Accuracy & N/A            & $\bm{86\%\pm3\%}$   & $80\%\pm2\%$    & $84\%\pm2\%$  \\
    \hline
    
    \multirow{4}{*}{Button (40)}
         & Return   & $8.96\pm1.06$  & $\bm{8.73\pm0.78}$      & $-2.01\pm0.37$  & $9.01\pm1.51$ \\
         & Cost     & $38.25\pm5.20$ & $\bm{37.51\pm5.65}$     & $4.26\pm3.33$   & $50.78\pm8.10$ \\
         & Learned Cost & N/A            & $\bm{-0.12\pm 0.60}$                 & $78.04\pm 4.80$            & $-0.51\pm 0.95$ \\
         & Accuracy & N/A            & $80\%\pm3\%$       & $73\%\pm3\%$    & $\bm{82\%\pm2\%}$ \\
    \hline
  \end{tabular}
  \label{tab:results_ap}
\end{table}

\subsection{Cost Distribution Analysis}
\label{appendix:cost_distribution}

In above section, we presented the empirical results of algorithms in terms of return and cost. While these metrics provide crucial insights into an algorithm's overall effectiveness and ability to satisfy the true constraint, they do not directly reveal why certain methods succeed or fail in learning the constraint itself. To gain a deeper understanding of the underlying constraint inference capabilities of each approach, we conduct a detailed analysis of the characteristics of the learned cost distributions.

As highlighted in Section~\ref{introduction}, Section~\ref{deadzone}, the distribution properties of the cost are fundamentally linked to the expectation-based constraint $\mathbb{E}[C]<d$. Especially in safety-critical scenarios, where costs often exhibit heavy-tailed. An accurate learned cost function that captures these characteristics may lead to reliably estimate the true expected cost. Failure to do so, as demonstrated by standard BT models which tend to infer symmetric distributions, can lead to inaccurate expectation estimates and constraint misalignment.

To quantitatively assess the dissimilarity between cost distributions, we utilize the 2-Wasserstein distance (W2 distance). W2 distance quantifies the minimum "effort" required to transform one distribution into the other, this property makes the W2 distance particularly well-suited for comparing distributions that may differ significantly in their shape, location (mean), and scale (variance), providing a more comprehensive metric than, for example, simply comparing means or using measures like KL divergence which can be sensitive to differences in support or zero probabilities. A lower W2 distance indicates that the two distributions are closer to each other in terms of their overall structure.

\begin{table}
  \centering
  \renewcommand{\arraystretch}{1.5}
  \caption{Evaluation metrics: W2 distance and Bias between converged cost and threshold}
  \begin{tabular}{c  cc  cc  cc}
    \hline
    \multirow{2}{*}{Tasks} & \multicolumn{2}{c}{PbCRL (Ours)} & \multicolumn{2}{c}{RLSF} & \multicolumn{2}{c}{PPO-BT} \\
    \cline{2-7}
    & W2 & Bias & W2 & Bias & W2 & Bias \\
    \hline
    Goal   & $\bm{16.8}$ & $\bm{1.5}$ & $68.1$ & $11.8$ & $42.7$ & $6.9$ \\
    Push    & $\bm{20.3}$ & $\bm{0.1}$ & $82.4$ & $14.9$ & $36.5$ & $6.8$ \\
    Button  & $\bm{44.3}$ & $\bm{2.5}$ & $165.5$ & $35.7$ & $78.8$ & $10.78$ \\
    \hline
  \end{tabular}
  \label{tab:w2_ap}
\end{table}

Table~\ref{tab:w2_ap} presents the W2 distances between the learned cost distributions (for PbCRL and the baselines) and the ground truth cost distribution across the three tasks. Analyzing these results, we observe that a lower W2 distance, signifying that a learned cost distribution is closer to the true heavy-tailed cost distribution, consistently correlates with improved constraint alignment and, as a consequence, better balanced policy performance in terms of safety and reward, as evidenced by the primary results presented in Table~\ref{tab:results_ap}. Focusing on the performance of our proposed PbCRL, Table~\ref{tab:w2_ap} shows that PbCRL consistently achieves the lowest W2 distances compared to other baselines across all tested task. These results provide quantitative evidence that the cost distribution learned by PbCRL is substantially closer to the true heavy-tailed safety cost distribution. 

Furthermore, to gain a more intuitive and qualitative understanding of the learned cost functions and how they align with the true underlying safety costs, we sample 5000 trajectories and use cost models from PbCRL and the baselines to estimate the costs of these trajectories. The estimated costs are normalized, and their distributions are fitted using Kernel Density Estimation (KDE) and plotted in Figure~\ref{fig:dist_ap}. The solid vertical line represents the expectation of the respective cost distribution, and the dashed vertical line indicates the corresponding safety threshold. 

\begin{figure}
  \centering

  \begin{minipage}[b]{0.3\textwidth}
    \centering
    \includegraphics[width=\textwidth]{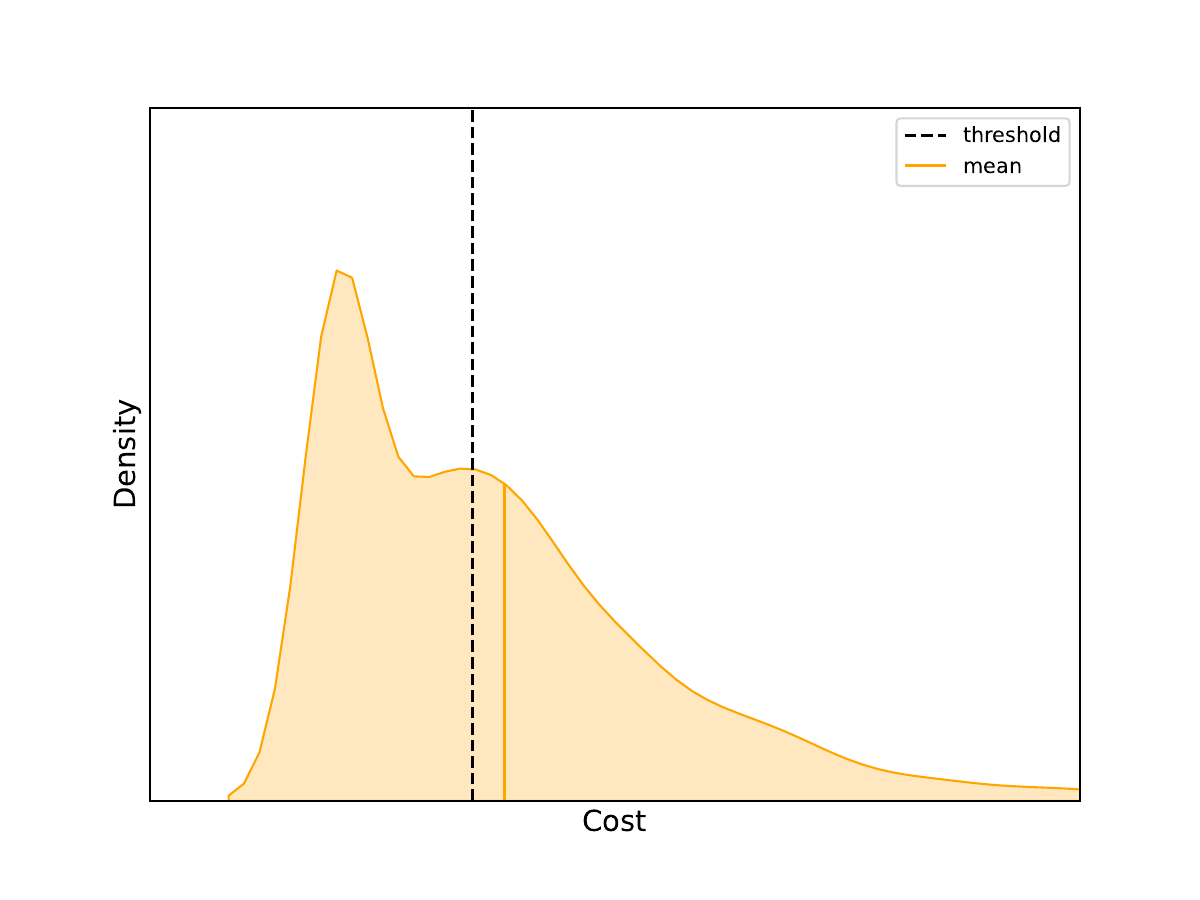} 
    (a) Ground truth cost
    \label{fig:sub1}
  \end{minipage}\hspace{1em} 
  \begin{minipage}[b]{0.3\textwidth}
    \centering
    \includegraphics[width=\textwidth]{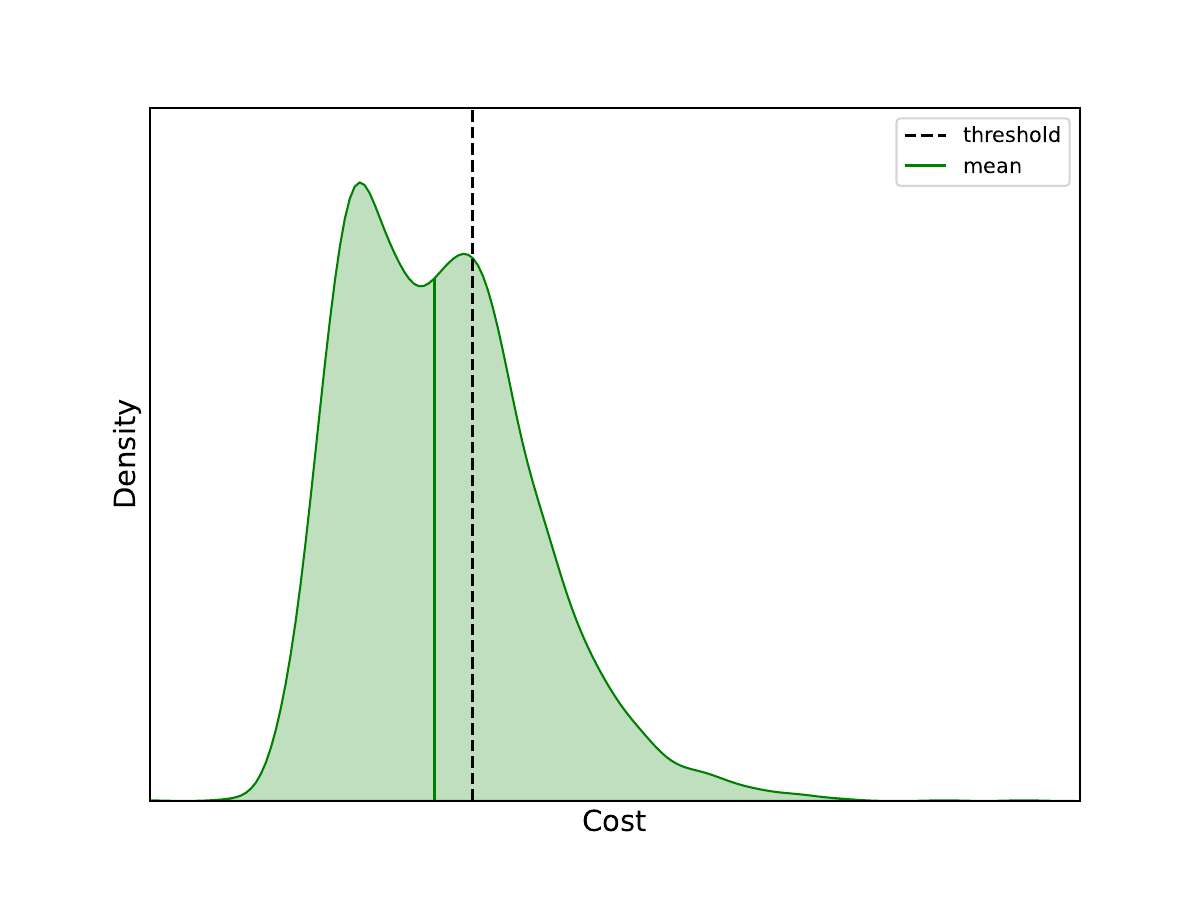} 
    (b) PPO-BT learned cost
    \label{fig:sub2}
  \end{minipage}
  
  \vspace{1em} 

  \begin{minipage}[b]{0.3\textwidth}
    \centering
    \includegraphics[width=\textwidth]{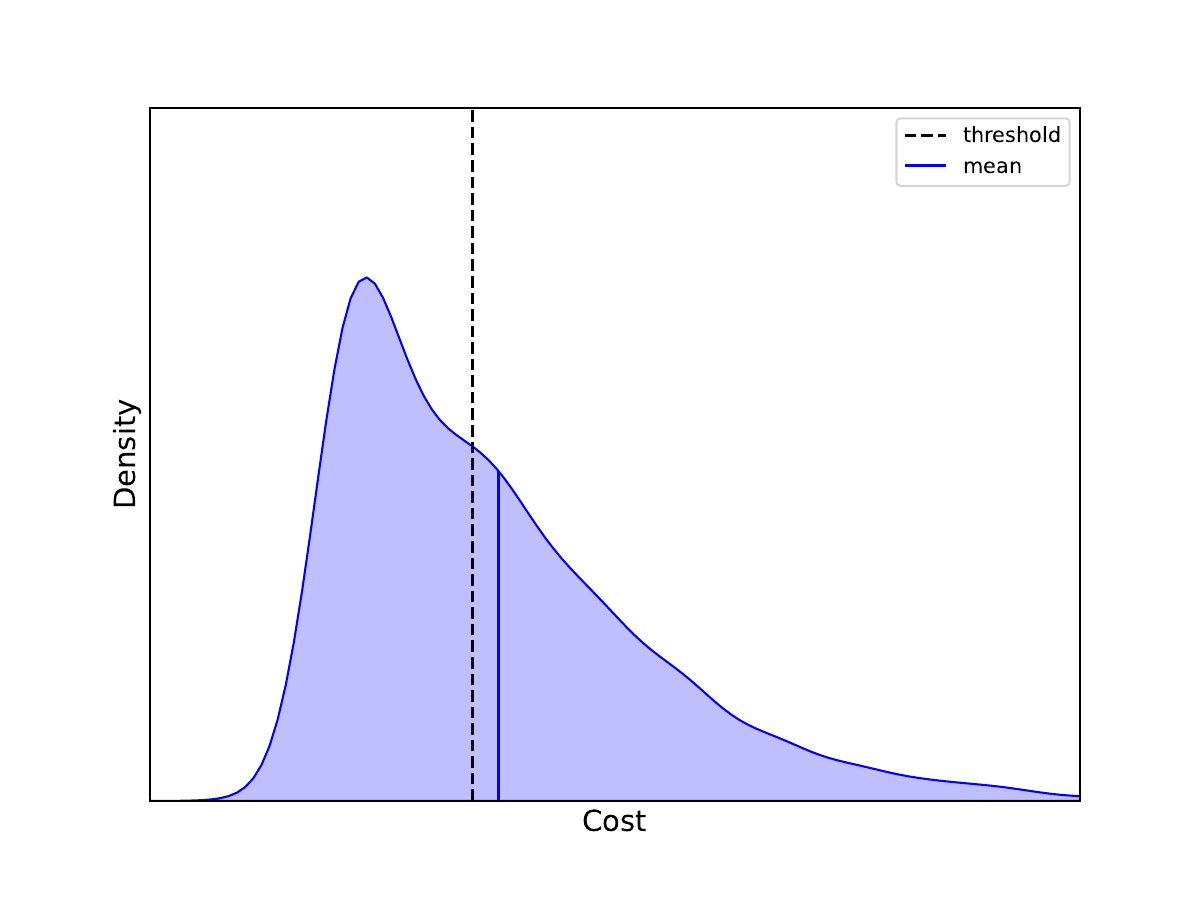} 
    (c) PbCRL learned cost
    \label{fig:sub3}
  \end{minipage}\hspace{1em}
  \begin{minipage}[b]{0.3\textwidth}
    \centering
    \includegraphics[width=\textwidth]{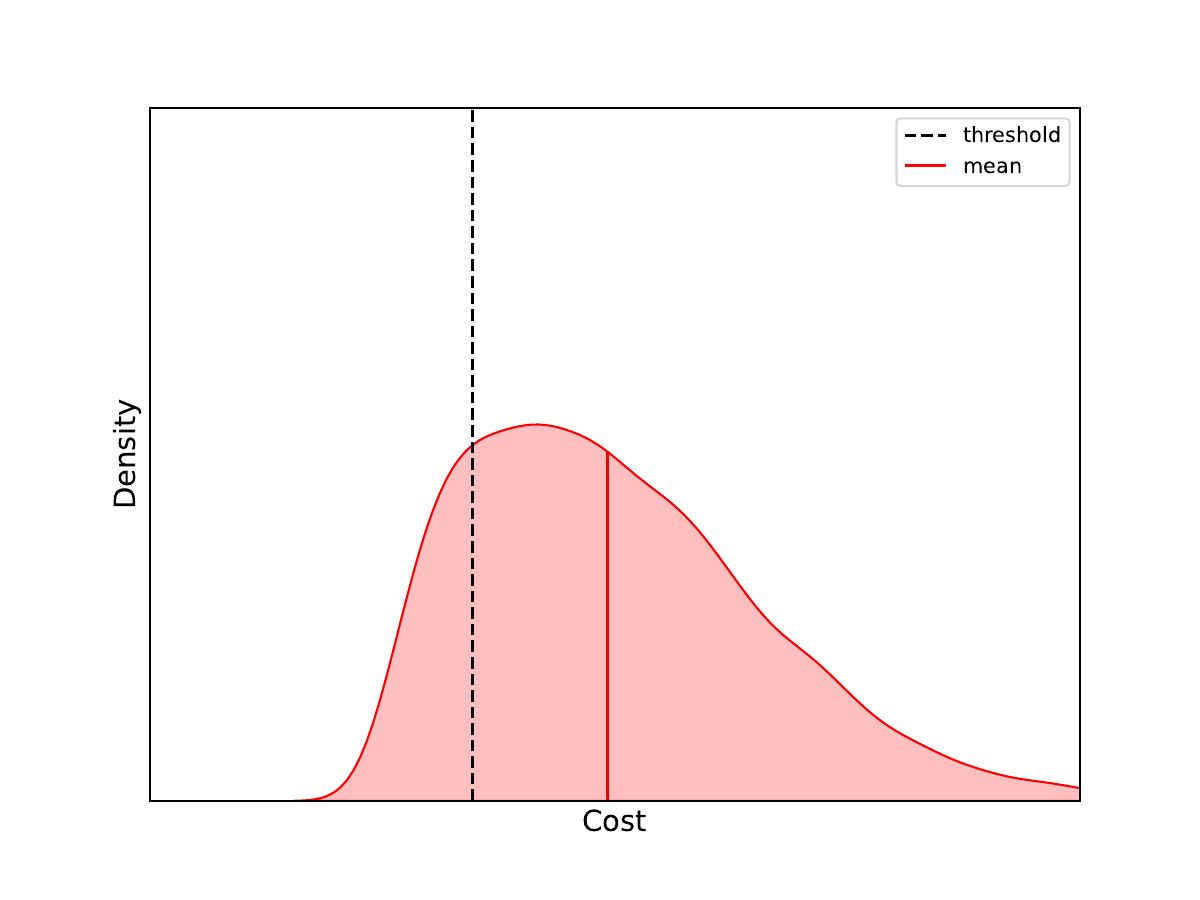} 
    (d) RLSF learned cost
    \label{fig:sub4}
  \end{minipage}

  \vspace{1em} 
  
  \caption{Cost distribution in Goal task. Solid line: expectation; Black dashed line: safety threshold.}
  \label{fig:dist_ap}
\end{figure}

The ground true cost distribution (a) is distinctly heavy-tailed. Its expectation (orange solid) exceeds the threshold (dashed), indicating constraint violation. PPO-BT (b) utilizes a standard BT model, resulting in a relatively symmetric cost distribution. Despite achieving high accuracy (Table~\ref{tab:results_ap}), this model underestimates the true expected cost. Its expectation (green solid) falls below the threshold (dashed), leading the agent to falsely believe the constraint is satisfied (Type II Error). RLSF learns a binary cost based on safe probability estimation. While achieving reasonable accuracy using its classification-based metrics (sum the logits of statewise safe probabilities to assess trajectory-level safety), the trajectory costs (d) by summing these binary costs shift towards higher values (red solid). This overestimation, explained as the performance degradation of RLSF when preference feedback is collected over longer segments such as trajectory level \cite{reddy2024safety}, leads to overly conservative constraints and suboptimal return (Table~\ref{tab:results_ap}). 

In contrast to the standard BT model in PPO-BT and RLSF, the cost distribution learned by PbCRL (c) exhibits a heavy-tailed characteristic similar to the ground truth (a). Furthermore, the cost expectation of PbCRL (blue solid) behaves similarly to the ground truth cost expectation (orange solid), indicating better constraint alignment. 

Above findings strongly suggest that the technical contributions in PbCRL, particularly the dead zone mechanism discussed in Section~\ref{deadzone}, are effective in encouraging the learned cost model to better approximate the heavy-tailed characteristic and structure of the true cost distribution, thereby facilitating superior constraint alignment and enabling the preferable policy performance, as observed in Figure~\ref{fig:threeimages_ap} and Table~\ref{tab:results_ap}

\subsection{Uniform cost threshold across tasks}
\label{sec:uniform_threshold}
As stated in the main text, cost thresholds are typically assigned task-specifically to scale with the difficulty of the task. This ensures that the evaluation remains informative and that the constraints are neither too loose nor too tight for any given task. To further validate this approach, we conduct experiments with a uniform cost threshold of 20 across all Safety Gymnasium tasks. As shown in Table~\ref{tab:uniform_threshold}, this uniform threshold causes performance collapse in Push and Button (returns fell to near zero) for all methods, rendering their capabilities indistinguishable.

\begin{table}[h]
  \centering
  \caption{Performance Comparison with Uniform Cost Threshold (20) across Tasks}
  \begin{tabular}{llcccc}
    \toprule
    Task & Metric & PPO-Lag (Oracle) & PbCRL (Ours) & RLSF & PPO-BT \\
    \midrule
    \multirow{2}{*}{HalfCheetah} & Reward & $2803$ & $2565$ & $2316$ & $2687$ \\
                                 & Cost & $20.78$ & $20.25$ & $14.52$ & $22.50$ \\
    \midrule
    \multirow{2}{*}{Walker2d}    & Reward & $3075$ & $2790$ & $2587$ & $2853$ \\
                                 & Cost & $19.54$ & $17.84$ & $13.87$ & $20.43$ \\
    \midrule
    \multirow{2}{*}{Humanoid}    & Reward & $6248$ & $5761$ & $5483$ & $6103$ \\
                                 & Cost & $19.68$ & $19.20$ & $17.27$ & $21.84$ \\
    \midrule
    \multirow{2}{*}{Goal}        & Reward & $23.50$ & $23.41$ & $5.78$ & $24.26$ \\
                                 & Cost & $20.70$ & $18.50$ & $8.20$ & $26.97$ \\
    \midrule
    \multirow{2}{*}{Push}        & Reward & $1.81$ & $1.77$ & $0.73$ & $1.90$ \\
                                 & Cost & $19.82$ & $20.66$ & $14.42$ & $28.21$ \\
    \midrule
    \multirow{2}{*}{Button}      & Reward & $2.62$ & $1.78$ & $-2.73$ & $1.97$ \\
                                 & Cost & $18.50$ & $19.10$ & $3.37$ & $25.72$ \\
    \bottomrule
  \end{tabular}
  \label{tab:uniform_threshold}
\end{table}

\subsection{Ablation Studies on Dead Zone parameter}
We investigate the sensitivity of the dead zone parameter $\delta$ during the cost model pre-training phase, with results summarized in Table~\ref{tab:ablation_deadzone}. 
Setting $\delta=0$ reverts the framework to a standard BT model. As discussed in Section~\ref{deadzone}, this induces a symmetric cost distribution that fails to capture tail risks, resulting in a high W2 distance and systematic cost underestimation (leading to safety violations). 
A small dead zone ($\delta=0.1$) offers only marginal improvements, insufficient to correct the distributional bias. 
Conversely, an excessively large dead zone ($\delta=2$) over-corrects the distribution. While this ensures safety, it results in over-conservative policies (lower rewards) and an increased W2 distance due to severe distributional distortion. 
An appropriate dead zone parameter ($\delta=1$) proves the best choice, achieving the lowest W2 distance. This confirms that a moderate dead zone effectively recovers the heavy-tailed characteristic of the true cost, striking the best balance between robust constraint satisfaction and reward maximization.

\begin{table}
  \centering
  \renewcommand{\arraystretch}{1.5}
  \caption{Ablation studies on dead zone parameter $\delta$}
  \setlength{\tabcolsep}{3pt}
  \begin{tabular}{llcccc}
    \hline
    Task (Threshold) & Metrics & $\delta=0$ (No) & $\delta=0.1$ (Small) & $\delta=1$ (Medium) (Ours) & $\delta=2$ (Large) \\
    \hline
      
    \multirow{4}{*}{Goal (20)} 
         & W2 distance & $38.9$ & $33.5$ & $\bm{16.8}$ & $29.1$ \\ 
         & Reward      & $24.55\pm1.01$ & $24.79\pm1.18$ & $\bm{23.41\pm1.08}$ & $22.51\pm1.05$ \\
         & Cost        & $28.05\pm2.55$ & $27.46\pm1.89$ & $\bm{18.50\pm2.03}$ & $16.49\pm2.18$ \\
         & Accuracy    & $84\%\pm2\%$ & $85\%\pm2\%$ & $87\%\pm3\%$ & $\bm{88\%\pm2\%}$ \\ 
    \hline
       
    \multirow{4}{*}{Push (35)}
         & W2 distance & $36.1$ & $35.8$ & $\bm{20.3}$ & $23.6$ \\ 
         & Reward      & $4.15\pm0.55$ & $3.81\pm0.21$ & $\bm{3.20\pm0.38}$ & $2.96\pm0.35$ \\
         & Cost        & $42.50\pm5.50$ & $40.26\pm2.15$ & $\bm{34.86\pm3.13}$ & $30.91\pm1.78$ \\ 
         & Accuracy    & $83\%\pm3\%$ & $85\%\pm1\%$ & $\bm{86\%\pm3\%}$ & $86\%\pm3\%$ \\ 
    \hline
  \end{tabular}
  \label{tab:ablation_deadzone}
\end{table}

\subsection{Ablation Studies on SNR-based Loss}

$\mathcal{L}_{PbCI}=\mathcal{L}_{pair}+\mathcal{L}_{safe}^{DZ}+\mathcal{L}_{SNR}$ represents the total loss function used in the PbCRL framework. $\mathcal{L}_{pair}$ and $\mathcal{L}_{safe}^{DZ}$ are both cross-entropy losses operating on the same scale, thus they naturally take equal weights without requiring heuristic tuning. $\mathcal{L}_{SNR}$ (Section~\ref{SNR}) is specifically designed to encourage the learned cost model to induce sufficient variance in its outputs, thereby providing more informative signals for efficient policy optimization. We conduct a sensitivity analysis on the weighting coefficient $\zeta$ of $\mathcal{L}_{SNR}$ in Goal task, summarizing mid-training and final performance in Table~\ref{tab:snr_ap}.

\begin{table}[h]
  \centering
  \caption{Ablation study on SNR loss weight $\zeta$}
  \begin{tabular}{@{}lcccc@{}}
    \toprule
    Metrics & $\zeta = 10^{-2}$ & $\zeta = 10^{-3}$ (Ours) & $\zeta = 10^{-5}$ & $\zeta = 0$ \\
    \midrule
    Return (Mid) & $3.8 \pm 2.2$ & $\bm{22.9 \pm 1.5}$ & $22.5 \pm 1.3$ & $22.1 \pm 1.1$ \\
    Cost (Mid)   & $17.9 \pm 8.1$ & $\bm{18.4 \pm 2.3}$ & $24.5 \pm 2.5$ & $24.8 \pm 2.8$ \\
    Return (End) & $6.5 \pm 1.8$ & $\bm{23.4 \pm 1.1}$ & $20.6 \pm 1.6$ & $20.7 \pm 1.4$ \\
    Cost (End)   & $21.9 \pm 5.9$ & $\bm{18.5 \pm 2.0}$ & $19.2 \pm 2.2$ & $18.4 \pm 2.6$ \\
    \bottomrule
  \end{tabular}
  \label{tab:snr_ap}
\end{table}

Results show that an excessive SNR weight $(10^{-2})$ destabilizes the learning process, evidenced by high variance in both return and cost. Conversely, removing  (0) or under-weighting $(10^{-5})$ the SNR term leads to a flattened cost landscape, resulting in slow safety convergence (higher costs at mid-training). A balanced $\zeta$ $(10^{-3})$ induces sufficient cost differentiation, accelerating convergence with lower cost and higher return both at mid-training and final. This allows the algorithm to reach a desired performance level within less training iterations, particularly beneficial in real-world applications where interacting with the environment is expensive / time-consuming, or in computationally intensive domains like language model alignment where policy training is costly. 

\subsection{Ablation Studies on Two-stage Training Strategy}

We also evaluate the impact of the two-stage training strategy proposed in Section~\ref{two-stage}. We compare the full PbCRL (offline pre-training followed by adaptive online finetuning) against PbCRL-Offline (offline pre-training only) in the Goal and Push tasks, with training curves shown in Figure~\ref{fig:2stage_ap}.

\begin{figure}
  \vspace{2em}
  \centering
  \includegraphics[width=0.8\textwidth]{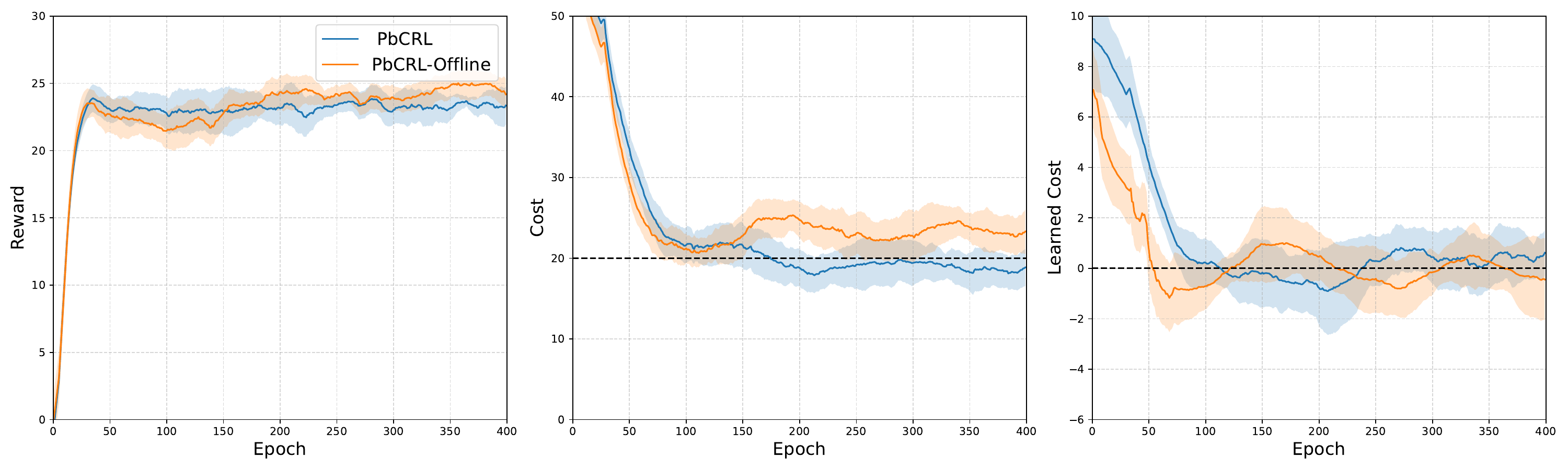}
  
  \vspace{2em}
  \includegraphics[width=0.8\textwidth]{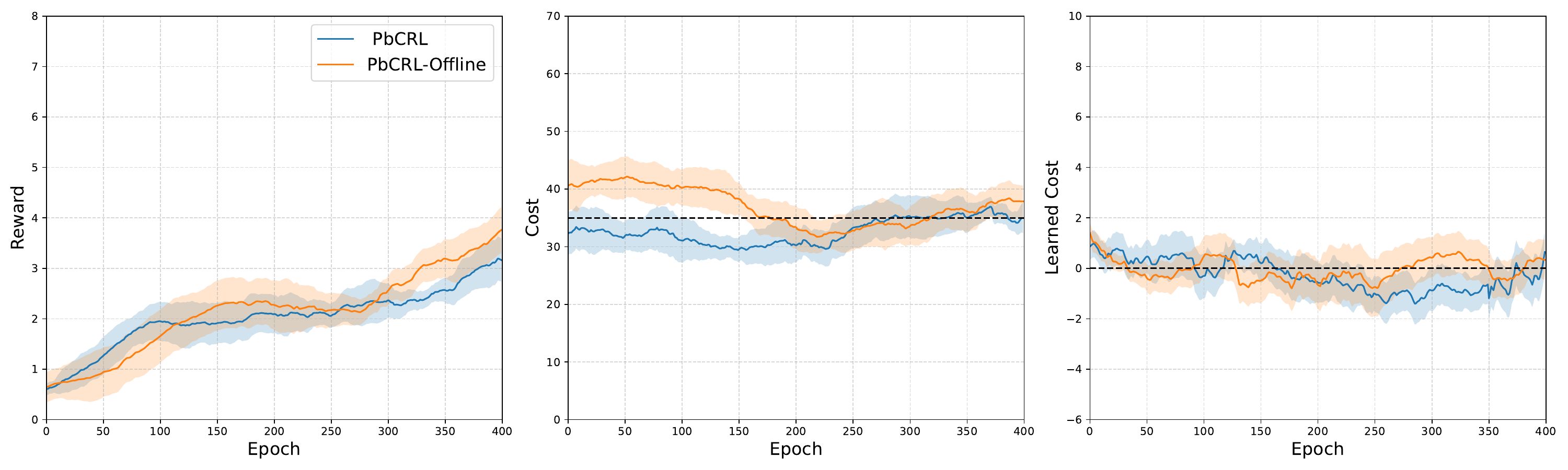}
  \caption{Ablation study on the two-stage training strategy in Goal (Row 1) and Push (Row 2) tasks.}
  \label{fig:2stage_ap}
\end{figure}

As previously discussed, relying solely on continuous online annotation is expensive and impractical. A primary advantage of our two-stage strategy, demonstrated by both PbCRL and PbCRL-Offline variants, is that they leverage offline dataset for the initial phase of cost model training. This design inherently reduces the burden of costly online labeling compared to purely online approaches. Furthermore, the performance achieved by PbCRL-Offline, which infers constraints solely from this offline data, yields policies with performance comparable to other baselines in Table~\ref{tab:results_ap}.

The full PbCRL algorithm, which incorporates the adaptive online finetuning stage, consistently demonstrates further improvements in achieving better constraint satisfaction. Figure~\ref{fig:2stage_ap} (Column 2) illustrates that the cost of both methods converges, but the refinement offered by online finetuning enables the policy to achieve a cost (blue) that are closer to the safety threshold (dashed). This enhanced constraint satisfaction indicates that the adaptive online finetuning step, although utilizing a comparatively smaller amount of online annotation, effectively allows the learned cost model to better align with the genuine safety requirements of the environment where the agent is deployed. 

\subsection{Robustness to Label Noise}
\label{appendix:noise_robustness}

To evaluate the resilience of our approach against imperfect human feedback, we conduct a sensitivity analysis under varying preference label noise levels ($\{0\%, 5\%, 10\%, 30\%\}$). This evaluation spans both the open-ended robotic navigation setting (\textit{Goal}) and highly dynamic autonomous driving tasks (\textit{Block}). The fidelity of the learned cost distribution is quantified using the 2-Wasserstein (W2) distance against the ground-truth cost distribution.

\begin{table}[h]
  \centering
  \caption{W2 distance under different preference label noise levels in robotic (\textit{Goal}) and driving (\textit{Block}) tasks.}
  \setlength{\tabcolsep}{12pt}
  \renewcommand{\arraystretch}{1.1}
  \begin{tabular}{llcc}
    \toprule
    Task & Noise Level & PbCRL (Ours) & PPO-BT \\
    \midrule
    \multirow{4}{*}{\textbf{Goal}}  & $0\%$ (Clean) & $\bm{16.8}$ & $42.7$ \\
                                   & $5\%$         & $\bm{24.5}$ & $53.4$ \\
                                   & $10\%$        & $\bm{36.8}$ & $60.1$ \\
                                   & $30\%$        & $82.8$      & $\bm{79.6}$ \\
    \midrule
    \multirow{4}{*}{\textbf{Block}} & $0\%$ (Clean) & $\bm{5.9}$  & $6.0$ \\
                                   & $5\%$         & $\bm{6.9}$  & $\bm{6.9}$ \\
                                   & $10\%$        & $7.3$       & $\bm{7.2}$ \\
                                   & $30\%$        & $\bm{9.4}$  & $10.2$ \\
    \bottomrule
  \end{tabular}
  \label{tab:noise_robustness}
\end{table}

Table~\ref{tab:noise_robustness} summarizes the W2 distances across different noise levels. As expected, introducing preference label noise leads to a natural degradation in constraint alignment (manifested as increased W2 distances) for both PbCRL and the standard BT baseline (PPO-BT). In the open-ended \textit{Goal} task, PbCRL consistently maintains a lower W2 distance compared to PPO-BT across low to moderate noise levels ($0\% \text{--} 10\%$). This suggests that our dead-zone mechanism provides an effective tolerance buffer, preserving the heavy-tailed characteristics and structural alignment of the cost function even under moderate perturbations. At extreme noise levels ($30\%$), both methods suffer more pronounced performance drops due to the highly unconstrained nature of the navigation plane.

In autonomous driving scenarios, the absolute W2 distances and the rate of degradation are considerably smaller than those in robotic navigation. This milder degradation is primarily because highway driving features highly structured safety boundaries—such as discrete lane lines and explicit vehicle proximity limits—which inherently constrain the cost distribution more tightly than in open-ended settings. Despite this different domain structure, PbCRL demonstrates resilience.

\subsection{Extended Analysis on Language Model Alignment}
\label{appendix:llm_analysis}

To provide a more comprehensive understanding of the language model (LLM) alignment experiments presented in Section~\ref{complex_scenarios}, we discuss the performance characteristics of the baselines.

\begin{table}[h]
  \centering
  \caption{Language Model Alignment Results}
  \begin{tabular}{lcccc}
    \toprule
    Metric & PPO & RLSF & Safe RLHF & PbCRL (Ours) \\
    \midrule
    Win Rate (Helpfulness) $\uparrow$ & $72.5\%$ & $60.4\%$ & $79.4\%$ & $\bm{80.7\%}$ \\
    Win Rate (Harmlessness) $\uparrow$ & $60.7\%$ & $54.2\%$ & $76.5\%$ & $\bm{82.1\%}$ \\
    Reward $\uparrow$ & $2.78$ & $1.31$ & $4.05$ & $\bm{4.22}$ \\
    Cost $\downarrow$ & $-0.57$ & $1.10$ & $-2.97$ & $\bm{-3.03}$ \\
    \bottomrule
  \end{tabular}
  \label{tab:llm_appendix}
\end{table}

As shown in Table~\ref{tab:llm_appendix}, RLSF \cite{reddy2024safety} yields suboptimal performance. Originally designed for continuous control, RLSF estimates state-wise binary costs from segment-level feedback. Adapting it to sequence-level LLMs causes a methodological mismatch, yielding suboptimal performance.

In physical control tasks like Safety Gymnasium, reward (velocity) and cost (collisions) are intrinsically conflicting. However, in LLM alignment, helpfulness (reward) and harmlessness (cost) exhibit a unique semantic coupling (positive correlation). This coupling is empirically evidenced by our reward-only PPO baseline (Table~\ref{tab:llm_appendix}), which inherently achieves a 60.7\% harmlessness win rate over the SFT model even without any cost model. By recovering heavy tail (Dead Zone) and ensuring informative gradients (SNR), PbCRL accurately identifies safe-yet-helpful responses, shifting the entire Pareto frontier upward instead of merely trading off.


\end{document}